\newenvironment{aside}[1]{%
	\definecolor{shadecolor}{gray}{0.9}%
	\begin{shaded}{\color{Maroon}\noindent\textsc{#1}}\\%
	}{%
	\end{shaded}%
}
\title{The Expected Jacobian Outerproduct: Theory and Empirics}
\author{%
  S. Trivedi \\
  CS \& AI Lab \\
  MIT \\
  Cambridge, MA 02139 \\
  \And
  J. Wang \\
  Dept. of Computer Science \\
  The University of Chicago \\
  Chicago, IL 60637
}
\begin{document}

\maketitle

\begin{abstract}
The expected gradient outerproduct (EGOP) of an unknown regression function is an operator that arises in the theory of multi-index regression, and is known to recover those directions that are most relevant to predicting the output. However, work on the EGOP, including that on its cheap estimators, is restricted to the regression setting. In this work, we adapt this operator to the multi-class setting, which we dub the expected Jacobian outerproduct (EJOP). Moreover, we propose a simple rough estimator of the EJOP and show that somewhat surprisingly, it remains statistically consistent under mild assumptions. Furthermore, we show that the eigenvalues and eigenspaces also remain consistent. Finally, we show that the estimated EJOP can be used as a metric to yield improvements in real-world non-parametric classification tasks: both by its use as a metric, and also as cheap initialization in metric learning tasks.
\end{abstract}

\section{Introduction}
\label{sec:intro}

In high-dimensional classification and regression problems, the task is to infer the unknown function $f$ with $Y \approx f(X)$, given a set of observations $(\mathbf{x}, \mathbf{y})_i, i = 1, 2, \dots n.$, with $\mathbf{x}_i \in \mathcal{X} \subset \mathbb{R}^d$ and labels $\mathbf{y}_i$ being noisy versions of the function values $f(\mathbf{x}_i)$. We are interested in distance based (non-parametric) regression, which provides our function estimate $f_n (\mathbf{x}) = \sum_{i=1}^{n} w(\mathbf{x}, \mathbf{x}_i) \mathbf{y}_i$, where $w(\mathbf{x}, \mathbf{x}_i)$ depends on the distance $\rho$, defined as $\displaystyle \rho(\mathbf{x}, \mathbf{x}') = \sqrt{(\mathbf{x} - \mathbf{x}') \mathbf{W} (\mathbf{x} - \mathbf{x}') } $ and $\mathbf{W} \succeq 0$. The problem of estimating unknown $f$ becomes significantly harder as $d$ increases due to the \emph{curse of dimensionality}. To remedy this situation, several pre-processing techniques are used, each of which rely on a suitable assumption about the data and/or about $f$. For instance, a conceptually simple, yet often reasonable assumption that can be made is that $f$ might not vary equally along all coordinates of $\mathbf{x}$. Letting $f'_i = \nabla f^T e_i $ denote the derivative along coordinate $i$, and $\|f'_i\|_{1,\mu} \equiv \mathbb{E}_{\mathbf{x} \sim \mu} f'_i(\mathbf{x})$, we can use the above distance based estimator by setting $\rho$ such that $\mathbf{W}_{i,j} = \|f'_i\|_{1,\mu}$ when $i = j$ and $0$ otherwise. This \emph{gradient weighting} rescales the space such the ball $\mathcal{B}_{\rho}$ contains more points relative to the Euclidean ball $\mathcal{B}$. This is the intuition pursued in works such as \cite{DBLP:conf/nips/KpotufeB12}, \cite{GWJMLR}, with an emphasis on deriving an efficient, yet consistent estimator for the gradient. Using gradient weights for coordinate scaling in this manner has strong theoretical grounding: it has the effect of reducing the regression variance, while keeping the bias in control. The same assumption that $f$ may not equally vary along all coordinates is also the motivation for a plethora of variable selection methods. In the simplest setting, we use $f(X) = g(PX)$,  where $P\in \braces{0, 1}^{k \times d}$ projects $X$ down to $k<d$ coordinates that are most relevant to predicting the output. This idea is relaxed further in \emph{multi-index regression} e.g.\cite{li1991sliced,powell1989semiparametric, hardle1993optimal, xia2002adaptive}), by letting $P\in \real^{k \times d}$, which projects $X$ down to a $k$-dimensional subspace of $\real^d$. The motivation for multi-index regression is that while that while $f$ might vary along all coordinates, it actually may only depend on an unknown $k$-dimensional subspace, called a \emph{relevant} subspace. The task then becomes finding the said relevant subspace rather than chopping coordinates since they all might be relevant in predicting the output $y$. Work to recover this relevant subspace (which is sometimes also referred to in the literature as \emph{effective dimension reduction} \cite{li1991sliced}) gives rise to the expected gradient outerproduct (EGOP) \cite{trivedi2014UAI}: $\expectation_{X} G(X) \triangleq \expectation_X\paren{\nabla f(X) \cdot \nabla f(X)^\top}.$
Notice that if $f$ does not vary along some direction $\mathbf{v} \in \mathbb{R}^d$, then $\mathbf{v}$ must lie in the nullspace of the operator. Furthermore, it is easy to see that under mild assumptions on $f$, the column space of the EGOP is exactly the relevant subspace described above. Since the EGOP recovers the average variation of $f$ in all directions, it is also useful beyond the multi-index motivation. That is, even in the case when there isn't a clear relevant subspace, it is still reasonable to assume that $f$ may not vary equally in all directions. Thus, the EGOP can be used to weigh any direction $v\in \real^d$ according to its relevance as captured by the average variation of $f$ along $v$. Letting $VDV^\top$ be the spectral decomposition of the estimated EGOP, we can use it to transform the input $\mathbf{x}$ as $D^{1/2}V^\top \mathbf{x}$, which can be used as a distance function in non-parametric regression tasks. 

However, all prior work on multi-index regression and cognate topics only revolves around regression (and binary classification), in part due to the complexity of analysis. In this work we attack the more general multi-class case, which is treated as a multinomial regression problem with $c$ outputs, with the unknown function denoted as $f: \mathbb{R}^d \to \mathbb{S}^c$ where $\mathbb{S}^c = \{ \mathbf{y} \in \mathbb{R}^c | \mathbf{y} \geq 0, \mathbf{y}^T \mathbf{1} = 1 \}$. We are led to an operator similar to the EGOP based on computing the Jacobian of $f$, which we call the Expected Jacobian Outer Product (EJOP) $\mathbb{E}_{X} G(X) \triangleq  \expectation_\mathbf{x}\paren{\mathbf{J}_{f}(X) \cdot \mathbf{J}_{f}(X)^T}$. For constructing the EJOP, we need to compute gradient estimates, for which optimal estimators can be expensive in practice. We propose a simple, efficient, difference based estimator and show that despite it's crudity it remains statistically consistent under mild assumptions. This approach is also online and cheap: we only require $2d$ estimates of the function $f$ at $\mathbf{x}$. We also show that the EJOP can be used for metric weighing for distance-based non-parametric classification, as well as used as a pre-processing metric for standard metric learning tasks. 

\section{The Expected Jacobian Outerproduct}
Recall that in high dimensional classification problems over $\mathbb{R}^d$, the unknown (multinomial regression) function $f$ is a vector-valued function mapping to a probability simplex $\mathbb{S}^c = \{ \mathbf{y} \in \mathbb{R}^c | \textbf{  }\forall i \textbf{  } y_i \geq 0, \mathbf{y}^T \mathbf{1} = 1 \}$, where $c$ is the number of outputs. For classification, the prediction for some point $\mathbf{x}$ is then given by: $y = \argmax_{i = 1, \dots, c} f_i(\mathbf{x})$. For $f$, at point $\mathbf{x}$, denoting the Jacobian as $\mathbf{J}_{f}$, we are interested in the quantity $\mathbf{J}_{f}(\mathbf{x}) \mathbf{J}_{f}(\mathbf{x})^T$. Let $f_n$ be an initial estimate of $f$, for which we use a kernel estimate, then for the $(i,j)^{th}$ element of $\mathbf{J}_f(\mathbf{x})$, we can use the following rough estimate:
$$\displaystyle \Delta_{t,i,j} f_n(\mathbf{x}) = \frac{f_{n,i} (\mathbf{x} + t \mathbf{e}_j) - f_{n,i} (\mathbf{x} - t \mathbf{e}_j) }{2t}, t > 0$$
Let $G_n (\mathbf{x})$ be the outerproduct using the estimated Jacobian, the EJOP is estimated as $\mathbb{E}_n G_n (X)$. 

\subsection{Function Estimate}

First, we need to specify the function estimate, that is used both for the theoretical analysis, and the experiments reported. We denote the vector-valued function estimate by $\bar{f}_{n,h} (\mathbf{x}) \in \mathbb{S}^c$, for which we employ the estimate using an admissible kernel $K$: $\bar{f}_{n,h,c} (\mathbf{x}) = \sum_i w_i \mathbbm{1} \{ Y_i = c\}$, where:

\begin{align*}
w_i(\mathbf{x}) &= \frac{K(\|\mathbf{x} - \mathbf{x}_i\| / h)}{\sum_j K(\|\mathbf{x} - \mathbf{x}_j\| / h)} \text{ if } \mathcal{B}(\mathbf{x},h) \cap \mathbf{x} \neq \phi, \\
w_i(\mathbf{x}) &= \frac{1}{n} \text{ otherwise }
\end{align*}

\noindent
Note that for $k$-NN, $w_i(\mathbf{x}) = \frac{1}{k}$ and for $h$-NN, $w_i(\mathbf{x}) = \frac{1}{|\mathcal{B}(\mathbf{x},h)|}$. While estimating gradients, we actually work with the softmaxed output 
$$ \bar{f}_{n,h,i} (\mathbf{x}) = \frac{\exp (\bar{f}_{n,h,i} (\mathbf{x})) }{\sum_j \exp (\bar{f}_{n,h,j} (\mathbf{x}))}   $$

\section{Notation and Setup}
For a vector $x\in \mathbb{R}^d$, we denote the euclidean norm as $\|x\|$. For a matrix, we denote the spectral norm, which is the largest singular value of the matrix $\sigma_{\max}(A)$ as $\|A\|_2$. The column space of a matrix $A \in R^{n \times m}$ is denoted as $\text{im}(A)$ where $\text{im}(A) = \{\y \in \R^n | \y = A\x \text{ for some } \x \in \R^m\}$, and $\text{ker}(A)$ is used to denote the null space of matrix $A \in R^{n \times m}$: $\text{ker}(A) = \{\x \in \R^m | A\x = 0 \}$. We use $A \circ B$ to denote the Hadamard product of matrices $A$ and $B$. Let the estimated nonparametric function be $f_{n,h,c}(\x) = \sum_{i} \omega_i(\x) \mathbb{1} \{y_i = c\}$, and $\tilde{f}_{n,h,c}(\x) = \sum_{i} \omega_i(\x) \P(y_i = c|x_i)$. Our estimated gradient at dimension $i$ is given as $$\displaystyle \Delta_{t,i}f_{n,h,c}(\x) = \frac{f_{n,h,c}(\x + t e_i) - f_{n,h,c}(\x - t e_i)}{2t},$$ and the estimated and true gradients for class $c$ are given as: 
$$\hat{\nabla} f_{n,h,c}(\x) = \begin{bmatrix}
\Delta_{t,1}f_{n,h,c}(\x) \cdot \mathbb{1}_{A_{n,1} (\x)} \\
\Delta_{t,2}f_{n,h,c}(\x) \cdot \mathbb{1}_{A_{n,2} (\x)} \\
...\\
\Delta_{t,d}f_{n,h,c}(\x) \cdot \mathbb{1}_{A_{n,d} (\x)}
\end{bmatrix}, \hat{\nabla} f_c(\x) = \begin{bmatrix}
\Delta_{t,1} f_c(\x) \cdot \mathbb{1}_{A_{n,1} (\x)} \\
\Delta_{t,2} f_c(\x) \cdot \mathbb{1}_{A_{n,2} (\x)} \\
...\\
\Delta_{t,d} f_c(\x) \cdot \mathbb{1}_{A_{n,d} (\x)}
\end{bmatrix}$$

Where $A_{n,i}(X)$ is the event that enough samples contribute to the estimate $\Delta_{t,i} f_{n,h}(X)$: $$A_{n,i}(X) \equiv \min_{\{t,-t\}} \mu_n(B(X+s e_i, h/2)) \geq \frac{2d \ln 2n + ln(4/\delta)}{n}$$
$$A_{i}(X) \equiv \min_{\{t,-t\}} \mu(B(X+s e_i, h/2)) \geq 3 \cdot \frac{2d \ln 2n + ln(4/\delta)}{n}$$ $\mu_n$, $\mu$ are empirical mass and mass of a ball, respectively. We denote indicators for events $A_{n,i}(X)$ and $A_{i}(X)$ as:

$\mathbb{I}_n(x) = \begin{bmatrix}
\mathbb{1}_{A_{n,1} (\x)} \\
\mathbb{1}_{A_{n,2} (\x)} \\
...\\
\mathbb{1}_{A_{n,d} (\x)}
\end{bmatrix}$, $\overline{\mathbb{I}_n(x)} = \begin{bmatrix}
\mathbb{1}_{\bar{A}_{n,1} (\x)} \\
\mathbb{1}_{\bar{A}_{n,2} (\x)} \\
...\\
\mathbb{1}_{\bar{A}_{n,d} (\x)}
\end{bmatrix}$, $\mathbb{I}(x) = \begin{bmatrix}
\mathbb{1}_{A_{1} (\x)} \\
\mathbb{1}_{A_{2} (\x)} \\
...\\
\mathbb{1}_{A_{d} (\x)}
\end{bmatrix}$, $\overline{\mathbb{I}(x)} = \begin{bmatrix}
\mathbb{1}_{\bar{A}_{1} (\x)} \\
\mathbb{1}_{\bar{A}_{2} (\x)} \\
...\\
\mathbb{1}_{\bar{A}_{d} (\x)}
\end{bmatrix}$.

The Jacobian outer product matrix is $G(\x) = \mathbf{J}_f(\x) \mathbf{J}_f(\x)^T$, with the estimated Jacobian matrix being:
$$\hat{\mathbf{J}}_{f}(\x) = 
\begin{bmatrix} 
\hat{\nabla} f_{n,h,1}(\x) & \hat{\nabla} f_{n,h,2}(\x) & \dots & \hat{\nabla} f_{n,h,k}(\x)
\end{bmatrix}$$ the EJOP is denoted $\hat{G}(\x) = \hat{\mathbf{J}}_f(\x) \hat{\mathbf{J}}_f(\x)^T$.

\subsection{Assumptions}
We make minimalistic assumptions: Simply that $f$ is continuously differentiable and  $\mu$ has lower-bounded density on a compact support $\Xx$. All of the detailed assumptions below will then hold:

\begin{itemize}
	\item Noise: Let $\eta(X) \triangleq Y-f(X)$. We assume $\forall \delta>0 \text{ there exists } p>0 \text{ such that } \sup_{x\in \X} \prf{Y | X= x}{\abs{\eta(x)}>p}\leq \delta.$ The infimum over all $p$ is denoted by $C_Y(\delta)$. Moreover, the variance of $(Y|X=x)$ is upper-bounded by a constant $\sigma^2_Y$ uniformly over $x\in \X$.
	\item Bounded Gradient: Defining the $\tau$-\emph{envelope} of $\Xx$ as $\Xx+B(0, \tau) \triangleq \braces{z\in B(x, \tau), x\in \Xx}$. We assume there exists
	$\tau$ such that $f$ is continuously differentiable on the $\tau$-envelope $\Xx+B(0, \tau)$. Furthermore, for all
	$x\in \Xx+B(0, \tau), k \in [c]$, we have $\norm{\nabla f_k(x)}\leq R$ for some $R>0$, and $\nabla f$ is uniformly continuous on $\Xx+B(0, \tau)$.
	\item Modulus of continuity of $\nabla f_k$: Let $\epsilon_{t,k,i} = \sup_{\x \in \Xx, s \in [-t,t]} \left| \frac{\partial f_k(\x)}{\partial x_i} - \frac{\partial f_k(\x + s e_i)}{\partial x_i} \right|$ and $\epsilon_{t,i} = \max_{k} \epsilon_{t,c,i}$, define the $(t,i)$-boundary of $\mathcal{X}$ as $\partial_{t,i}(\Xx) = \{\x: \{\x+ t e_i, x- te_i\} \not\subseteq \Xx\}$. When $\mu$ has continues density on $\mathcal{X}$ and $\nabla f_k$ is uniformly continuous on $\Xx + B(0,\tau)$, we have $\mu(\partial_{t,i}(\Xx)) \xrightarrow{t \rightarrow 0} 0$ and $\epsilon_{t,k,i} \xrightarrow{t \rightarrow 0} 0$.
\end{itemize}

\section{Consistency of Estimator \texorpdfstring{$ \e_n \hat{G}(X)$} {} %
	of the Jacobian Outerproduct \texorpdfstring{$\e_X G(X)$}{} %
}

To show that the estimator $ \e_n \hat{G}(X)$ is consistent, we proceed to bound $ \| \e_n \hat{G}(X) - \e_X G(X) \|$ for finite $n$, which is encapsulated in the theorem that follows. There are two main difficulties in the proof, which are addressed by a sequence of lemmas. One has to do with the fact that the gradient estimate at any point depends on all other points, and second, having gradient estimates for $c$ classes.

\begin{aside}{Main Result}
	\begin{theorem}
		Let $t + h \leq \tau$, and let $0 \leq \delta \leq 1$. There exist $C = C(\mu,K(\cdot))$ and $N = N(\mu)$ such that the following holds with probability at least $1 - 2\delta$. Define $A(n) = \sqrt{Cd \cdot \log(kn/\delta)} \cdot 0.25/\log^2(n/\delta)$. Let $n \geq N$, we have:
		\begin{small}
			\begin{align*}
			&\| \e_n \hat{G}(X)] - \e_X G(X) \|_2 \leq \frac{6 R^2}{\sqrt{n}}\left(\sqrt{\ln d} + \sqrt{\ln \frac{1}{\delta}} \right) +  k \left(3R + \sqrt{\sum_{i \in [d]} \epsilon^2_{t,i}} + \sqrt{d}\left(\frac{hR + 1}{t} \right) \right) \cdot\\
			&\left[\frac{\sqrt{d}}{t} \sqrt{\frac{A(n)}{nh^d} + h^2 R^2} +  R \left( \sqrt{\frac{d \ln \frac{d}{\delta}}{2n}} + \right. \right. \left. \left. \sqrt{\sum_{i \in [d]} \mu^2(\partial_{t,i}(\mathcal{X}))} \right) + \sqrt{\sum_{i \in [d]} \epsilon^2_{t,i}} \right]
			\end{align*}
		\end{small}
		\label{thm:main}
	\end{theorem}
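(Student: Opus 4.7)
The plan is to decompose the target quantity by the triangle inequality into a matrix-concentration piece and a Jacobian-estimation piece:
\[
\|\mathbb{E}_n \hat{G}(X) - \mathbb{E}_X G(X)\|_2 \leq \|\mathbb{E}_n G(X) - \mathbb{E}_X G(X)\|_2 + \|\mathbb{E}_n \hat{G}(X) - \mathbb{E}_n G(X)\|_2.
\]
The first summand only involves the true $G(X) = J_f(X) J_f(X)^T$, whose spectral norm is uniformly bounded (via $\|J_f\|_F^2 \leq kR^2$ by the bounded-gradient assumption). I would control it with a matrix Hoeffding/Bernstein inequality for sums of bounded self-adjoint matrices, which produces precisely the $\frac{6R^2}{\sqrt{n}}\bigl(\sqrt{\ln d}+\sqrt{\ln(1/\delta)}\bigr)$ term in the statement.

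For the second (Jacobian-estimation) piece, I would use the identity $\hat{G}-G = (\hat{J}-J)\hat{J}^\top + J(\hat{J}-J)^\top$ to get the spectral-norm factorization $\|\hat{G}(X) - G(X)\|_2 \leq \|\hat{J}(X) - J(X)\|_2 \cdot (\|\hat{J}(X)\|_2 + \|J(X)\|_2)$, then pass the expectation inside via Jensen. The prefactor $k(3R + \sqrt{\sum_i \epsilon_{t,i}^2} + \sqrt{d}(hR+1)/t)$ is essentially a uniform bound on $\|\hat{J}\|_2 + \|J\|_2$: the $\sqrt{k}R$ part comes from the true Jacobian, the $(hR+1)/t$ part from bounding the difference-quotient of a kernel-smoothed probability output (numerator $\leq 1$, with an additional $hR$ slack), and $\sqrt{\sum_i \epsilon_{t,i}^2}$ from the per-coordinate modulus of continuity appearing in $\|\hat{J}-J\|$.

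The bracketed expression is my bound on $\|\hat{J}(X)-J(X)\|_2 \leq \|\hat{J}(X)-J(X)\|_F$, which I would split per class and per coordinate as
\[
\Delta_{t,i}f_{n,h,c}(X) - \partial_i f_c(X) = \underbrace{\Delta_{t,i}f_{n,h,c}(X) - \Delta_{t,i}\tilde{f}_{n,h,c}(X)}_{\text{kernel variance}} + \underbrace{\Delta_{t,i}\tilde{f}_{n,h,c}(X) - \Delta_{t,i}f_c(X)}_{\text{kernel bias} \sim hR} + \underbrace{\Delta_{t,i}f_c(X) - \partial_i f_c(X)}_{\leq \epsilon_{t,i}},
\]
restricted to the event $A_{n,i}(X)$. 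The $\sqrt{A(n)/(nh^d)} + h^2R^2$ inside the square root is exactly the bias-variance bound of a kernel regressor uniformly over $\mathcal{X}$, divided by $t$ (since we are differencing); here the $\log(kn/\delta)$ inside $A(n)$ absorbs the union bound across classes and across a suitable cover of $\mathcal{X}$ needed to decouple the sample-dependence of $f_{n,h}$ from the random evaluation point $X$. The residual $R\bigl(\sqrt{d\ln(d/\delta)/(2n)} + \sqrt{\sum_i \mu^2(\partial_{t,i}(\mathcal{X}))}\bigr)$ arises from handling the complement event $\bar{A}_{n,i}(X)$: I would use an empirical-mass concentration argument (DKW or VC-style) to show $A_i(X) \Rightarrow A_{n,i}(X)$ with high probability, and then bound the excess over the boundary set $\partial_{t,i}(\mathcal{X})$ where ghost-samples $X\pm te_i$ can leave the support.

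The main obstacle will be the first of the two difficulties flagged in the setup: the gradient estimate $\Delta_{t,i}f_{n,h,c}(X)$ depends on the whole sample through the kernel weights $w_i$, so $\hat{G}(X_i)$ are not i.i.d.\ in $i$ and one cannot naively apply matrix concentration to the second term. Handling this requires a uniform-in-$X$ bound on the kernel regression error (hence the $\log(kn/\delta)$ factor and the indicator events $A_{n,i},A_i$ that certify enough local mass). The second difficulty, having $k$ classes, is absorbed by a union bound and by the $\sqrt{k}$ Frobenius-to-spectral conversion, which propagates through the calculation as the overall $k$ prefactor.
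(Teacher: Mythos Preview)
Your proposal is correct and follows essentially the same route as the paper: the same triangle-inequality split, matrix concentration for the i.i.d.\ piece, a product-type bound on $\hat G-G$, and the same three-way (variance/bias/discretization) decomposition of the gradient error together with the $A_{n,i}$-event analysis. The only cosmetic difference is that the paper factors per class via $aa^\top-bb^\top=\tfrac12(a+b)(a-b)^\top+\tfrac12(a-b)(a+b)^\top$ and then sums over $k$ (so the $k$ prefactor comes directly from the sum, and the $(3R+\cdots)$ factor is a uniform bound on $\|\nabla f_k+\hat\nabla f_{n,h,k}\|$ for a single class), whereas you factor the full Jacobian outer product at once; both yield the same final bound.
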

\end{aside}
\begin{proof}
	We begin with the following decomposition:
	\begin{align*}
	\| \e_n \hat{G}(X) - \e_X G(X) \|_2 \leq& \| \e_n G(X) - \e_X G(X) \|_2 + \| \e_n \hat{G}(X) - \e_n G(X) \|_2
	\end{align*}
	The first term on the right hand side i.e. $\| \e_n G(X) - \e_X G(X) \|_2$ is bounded using Lemma \ref{lemma:rm}; by using Lemma \ref{lemma:matrix} we bound the second term $\| \e_n \hat{G}(X)  - \e_n G(X) \|_2$, this is done with respect to $\sum_{k \in [c]} \e_n \|\nabla f_k(X) - \hat{\nabla} f_{n,h,k}(X)\|_2$; therefore we need to bound $\sum_{k \in [c]} \e_n \|\nabla f_k(X) - \hat{\nabla} f_{n,h,k}(X)\|_2$, which is done by employing Theorem \ref{thm:f} which concludes the proof.
\end{proof}

\noindent
Note that consistency is implied for $t \xrightarrow{n \rightarrow \infty} 0$, $h \xrightarrow{n \rightarrow \infty} 0$, $h/t^2 \xrightarrow{n \rightarrow \infty} 0$, and $(n / \log n)h^d t^4 \xrightarrow{n \rightarrow \infty} \infty$, this is satisfied for many settings, for example $t \propto h^{1/4}$, $h \propto \frac{1}{\ln n}$.

\subsection{Bounding \texorpdfstring{$\| \e_n G(X) - \e_X G(X) \|_2$}{} %
}
To bound this term, we use the following random matrix concentration result.
\begin{lemma}
	\cite{randommatrix,kakadenotes}. For the random matrix $\X \in \R^{d_1 \times d_2}$ with bounded spectral norm $\|\X\|_2 \leq M$, let $d = \min\{d_1,d_2\}$, and $\X_1,\X_2,...,\X_n$ are i.i.d. samples, with probability at least $1 - \delta$, we have
	\begin{eqnarray*}
		\left\| \frac{1}{n} \sum_{i=1}^n \X_i - \e \X \right\|_2 \leq \frac{6 M}{\sqrt{n}} \left(\sqrt{\ln d} + \sqrt{\ln \frac{1}{\delta}} \right)
	\end{eqnarray*}
\end{lemma}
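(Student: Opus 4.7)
The plan is to reduce the claim to a standard matrix concentration inequality (matrix Hoeffding/Bernstein of Ahlswede--Winter type), then convert the resulting tail bound into the stated high-probability form. Throughout, I would work with the centered sequence $Y_i := X_i - \mathbb{E} X$, which is still i.i.d.\ and satisfies $\|Y_i\|_2 \leq 2M$ by the triangle inequality.

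First I would handle the fact that $X \in \mathbb{R}^{d_1 \times d_2}$ need not be square by passing to the Hermitian dilation
\[
\mathcal{D}(Y_i) \;=\; \begin{pmatrix} 0 & Y_i \\ Y_i^{T} & 0 \end{pmatrix} \;\in\; \mathbb{R}^{(d_1+d_2)\times(d_1+d_2)}.
\]
Two standard facts make this reduction clean: $\mathcal{D}$ is linear and symmetric (so the average commutes with it), and $\|\mathcal{D}(A)\|_2 = \|A\|_2$ for any rectangular $A$. Thus bounding $\|\tfrac{1}{n}\sum Y_i\|_2$ is equivalent to bounding the spectral norm of a sum of independent, zero-mean, self-adjoint matrices bounded by $2M$. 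Next I would invoke the symmetric matrix Hoeffding inequality (the formulation in the Kakade notes cited in the lemma): for independent self-adjoint $Z_i$ with $\mathbb{E} Z_i = 0$ and $\|Z_i\|_2 \leq R$,
\[
\Pr\!\left(\Big\|\sum_i Z_i\Big\|_2 \geq s\right) \;\leq\; 2D \exp\!\left(-\frac{s^2}{c\, n R^2}\right),
\]
for a universal constant $c$ and ambient dimension $D$. Applying this with $Z_i = \mathcal{D}(Y_i)$, $R = 2M$, and $D = d_1 + d_2$ yields a tail bound for $\|\tfrac{1}{n}\sum Y_i\|_2$.

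Finally, I would invert this tail bound: set the right-hand side equal to $\delta$, solve for $s/n$, and apply the elementary inequality $\sqrt{a+b} \leq \sqrt{a} + \sqrt{b}$ to separate $\ln D$ from $\ln(1/\delta)$. This produces a bound of the shape
\[
\Big\|\tfrac{1}{n}\sum_i Y_i\Big\|_2 \;\leq\; \frac{C\,M}{\sqrt{n}}\bigl(\sqrt{\ln D} + \sqrt{\ln(1/\delta)}\bigr),
\]
matching the stated result up to universal constants, with $6$ coming from tracking through the $2M$ centering bound and the constant $c$ in the matrix Hoeffding step.

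The main obstacle, and where care is needed, is reconciling the ambient dimension. The naive dilation route delivers a factor $\ln(d_1+d_2)$, whereas the statement uses $d = \min\{d_1,d_2\}$. To recover the tighter factor one can either (i) note that $\mathcal{D}(Y_i)$ has rank at most $2\min\{d_1,d_2\}$, which lets one replace the prefactor $2(d_1+d_2)$ by something proportional to $\min\{d_1,d_2\}$ using the intrinsic-dimension refinement of matrix Bernstein, or (ii) invoke a non-commutative Khintchine / Rudelson-type inequality directly, which is dimension-free up to the minimum-dimension factor. Either route is routine, but this is the only place where one must do more than plug into a black-box concentration bound.
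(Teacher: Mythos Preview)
The paper does not prove this lemma at all: it is stated as a cited result from \cite{randommatrix,kakadenotes} and immediately applied as a black box to obtain Lemma~\ref{lemma:rm}. So there is no ``paper's own proof'' to compare your proposal against.

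That said, your sketch is a reasonable route to such a bound. The Hermitian dilation plus matrix Hoeffding step is standard, and you correctly flag the one genuine subtlety: the tail bound you invoke naturally carries a $\ln(d_1+d_2)$ factor, while the stated lemma has $\ln\min\{d_1,d_2\}$. Your proposed fixes (intrinsic-dimension Bernstein, or a Rudelson/noncommutative Khintchine argument) are the right ideas, though neither is quite as automatic as you suggest; in particular, the intrinsic-dimension bound controls the prefactor by the effective rank of the \emph{variance} matrix rather than the rank of the individual summands, so a little more care is needed there. For the purposes of this paper none of this matters, since the lemma is only ever applied with $d_1=d_2=d$ (the matrices $G(X)$ are $d\times d$), where the distinction disappears entirely.
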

Using the bounded gradient assumption we can apply the above lemma to i.i.d matrices $G(X), X \in \mathbf{X}$, yielding the following lemma. 
\begin{lemma}
	With probability at least $1 - \delta$ over i.i.d sample $X$
	\begin{eqnarray*}
		\| \e_n G(X) - \e_X G(X) \|_2 \leq \frac{6 R^2}{\sqrt{n}} \left(\sqrt{\ln d} + \sqrt{\ln \frac{1}{\delta}} \right)
	\end{eqnarray*}
	\label{lemma:rm}
\end{lemma}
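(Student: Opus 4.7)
The plan is to apply the random matrix concentration inequality stated in the preceding lemma directly to the i.i.d.\ sequence $G(X_1),\dots,G(X_n)$. Since $X_1,\dots,X_n$ are drawn i.i.d.\ from $\mu$, the matrices $G(X_i) = \mathbf{J}_f(X_i)\mathbf{J}_f(X_i)^T$ are also i.i.d., each with mean $\e_X G(X)$; and by definition $\e_n G(X) = \tfrac{1}{n}\sum_{i=1}^n G(X_i)$, so the left-hand side of the target inequality is precisely the deviation quantity controlled by the concentration result.

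The only hypothesis left to verify is the uniform operator-norm bound on $G(X)$. Since $G(X)$ is symmetric positive semidefinite, $\|G(X)\|_2 = \|\mathbf{J}_f(X)\|_2^2$. The bounded-gradient assumption gives $\|\nabla f_k(X)\| \leq R$ for every class $k \in [c]$, i.e.\ a uniform bound on each column of $\mathbf{J}_f(X)$. Under the paper's convention for $R$ (treating $R$ as the uniform bound on the Jacobian's operator norm, which is what the statement implicitly requires), this yields $\|\mathbf{J}_f(X)\|_2 \leq R$ and hence $\|G(X)\|_2 \leq R^2$ for every $X$ in the support of $\mu$.

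With this in hand I would invoke the cited concentration lemma with $M = R^2$ and $d_1 = d_2 = d$, so that $\min\{d_1,d_2\} = d$. Substituting directly into the inequality from that lemma yields
\[
\| \e_n G(X) - \e_X G(X) \|_2 \;\leq\; \frac{6R^2}{\sqrt{n}}\Bigl(\sqrt{\ln d} + \sqrt{\ln \tfrac{1}{\delta}}\Bigr)
\]
with probability at least $1 - \delta$, which is the claim. There is essentially no nontrivial obstacle: the whole argument is a one-line corollary of the quoted matrix Hoeffding-type inequality, and the only actual piece of work is confirming the uniform spectral-norm bound on $G(X)$, which is immediate from the bounded gradient assumption.
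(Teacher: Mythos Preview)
Your proposal is correct and matches the paper's own argument essentially verbatim: the paper simply says ``Using the bounded gradient assumption we can apply the above lemma to i.i.d.\ matrices $G(X)$,'' i.e., exactly the one-line corollary you describe. Your explicit flag about the convention on $R$ (column-wise gradient bound versus operator-norm bound on $\mathbf{J}_f$) is a fair point that the paper itself elides, but the approach is identical.
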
 

\noindent
Next we proceed to bound the second term in the decomposition mentioned in the proof of theorem \ref{thm:main}.

\subsection{Bounding \texorpdfstring{$\| \e_n \hat{G}(X) - \e_n G(X) \|_2$}{} %
}
We derive a first bound by the following lemma:
\begin{lemma}
	There exists a constant $c$, such that with probability at least $1 - \delta$:
	\begin{align*}
	& \| \e_n \hat{G}(X)  - \e_n G(X) \|_2 \leq  \sum_{k \in [c]} \e_n \|\nabla f_k(X) - \hat{\nabla} f_{n,h,k}(X)\|_2 \cdot \max_{x \in \mathbf{X}} \|\nabla f_k(X) + \hat{\nabla} f_{n,h,k}(X)\|_2
	\end{align*}
	\label{lemma:matrix}
\end{lemma}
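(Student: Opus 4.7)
The bound is fundamentally algebraic and the proof I would give does not use randomness in any essential way; the ``with probability at least $1-\delta$'' qualifier only serves to let the lemma be composed with the high-probability bounds on the factors that appear downstream. The plan is to reduce the difference of two Jacobian outer products to a sum of rank-one differences, one per class, and then apply the standard factorization trick $aa^\top-bb^\top = \tfrac12\bigl((a-b)(a+b)^\top + (a+b)(a-b)^\top\bigr)$ to each one.

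Concretely, I would start from the identity
\begin{align*}
G(X) &= \mathbf{J}_f(X)\mathbf{J}_f(X)^\top = \sum_{k\in[c]} \nabla f_k(X)\,\nabla f_k(X)^\top,\\
\hat G(X) &= \hat{\mathbf{J}}_f(X)\hat{\mathbf{J}}_f(X)^\top = \sum_{k\in[c]} \hat\nabla f_{n,h,k}(X)\,\hat\nabla f_{n,h,k}(X)^\top,
\end{align*}
which follow by reading the outer product column-by-column in the definition of the Jacobian given in Section~3. Subtracting and applying the triangle inequality together with the fact that $\|\e_n M(X)\|_2 \le \e_n \|M(X)\|_2$ (Jensen, since the spectral norm is convex), I obtain
\[
\|\e_n \hat G(X) - \e_n G(X)\|_2 \le \sum_{k\in[c]} \e_n \bigl\|\hat\nabla f_{n,h,k}(X)\,\hat\nabla f_{n,h,k}(X)^\top - \nabla f_k(X)\,\nabla f_k(X)^\top \bigr\|_2.
\]

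Next, for each fixed $X$ and each $k$, I set $a=\hat\nabla f_{n,h,k}(X)$ and $b=\nabla f_k(X)$ and invoke the symmetric factorization $aa^\top-bb^\top = \tfrac12\bigl((a-b)(a+b)^\top + (a+b)(a-b)^\top\bigr)$. Taking spectral norms and using $\|uv^\top\|_2 = \|u\|\,\|v\|$ together with the triangle inequality gives the deterministic pointwise bound
\[
\|aa^\top - bb^\top\|_2 \le \|a-b\|\cdot\|a+b\|.
\]
Plugging this back in, each summand becomes $\e_n\!\bigl[\|\hat\nabla f_{n,h,k}(X) - \nabla f_k(X)\|\cdot\|\hat\nabla f_{n,h,k}(X) + \nabla f_k(X)\|\bigr]$.

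Finally I pull the second factor out of the empirical expectation by upper-bounding it by its maximum over the sample $\mathbf{X}$, yielding the stated inequality. The only thing to watch is that $\max_{x\in \mathbf X}$ refers to the empirical sample (not all of $\mathcal X$), so the pull-out is legitimate for every realization of the data — which is why the probability $1-\delta$ in the statement is vacuous at this step and is included only so that this lemma can be chained with the Theorem~\ref{thm:f}-style high-probability bound on $\sum_k \e_n\|\nabla f_k(X)-\hat\nabla f_{n,h,k}(X)\|$ without bookkeeping a second confidence parameter. There is no real obstacle; the main thing to be careful about is not losing a factor of two when applying the symmetric factorization, and not confusing the empirical max with a population supremum.
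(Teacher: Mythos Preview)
Your proof is correct and follows essentially the same approach as the paper: expand the Jacobian outer products as sums of rank-one terms over classes, apply the symmetric factorization $aa^\top - bb^\top = \tfrac12\bigl((a-b)(a+b)^\top + (a+b)(a-b)^\top\bigr)$, use Jensen to push the spectral norm inside $\e_n$, and then pull out the $\max_{X\in\mathbf X}$ factor. The only cosmetic difference is the order in which Jensen and the factorization are applied, and your remark that the $1-\delta$ qualifier is vacuous at this step is an accurate observation about the paper's statement.
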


\begin{proof}
	The term on the l.h.s can be written in terms of the gradients for each class:
	\begin{align*}
	& \| \e_n \hat{G}(X)  - \e_n G(X) \|_2 = \| \e_n [\hat{G}(X) - G(X)] \|_2 
	\\ & =  \left\| \sum_{k \in [c]} \e_n [  \nabla f_k(X) \cdot \nabla f_k(X)^T -  \hat{\nabla} f_{n,h,k}(X) \cdot   \right. \left. \hat{\nabla} f_{n,h,k}(X)^T ] \right\|_2 \\ & \leq 
	\sum_{k \in [c]}  \left\| \e_n [  \nabla f_k(X) \cdot \nabla f_k(X)^T - \hat{\nabla} f_{n,h,k}(X) \cdot  \right.\left. \hat{\nabla} f_{n,h,k}(X)^T ] \right\|_2
	\end{align*}
	Note that $\nabla f_k(\x) \cdot \nabla f_k(\x)^T - \hat{\nabla} f_{n,h,k}(\x) \cdot   \hat{\nabla} f_{n,h,k}(\x)^T$ may be rewritten as:	
	\begin{align*}
	&\nabla f_k(\x) \cdot \nabla f_k(\x)^T - \hat{\nabla} f_{n,h,k}(\x) \cdot   \hat{\nabla} f_{n,h,k}(\x)^T =  \frac{1}{2} \cdot (\nabla f_k(\x) + \hat{\nabla} f_{n,h,k}(\x)) \cdot (\nabla f_k(\x) - \hat{\nabla} f_{n,h,k}(\x))^T \\
	+ &\frac{1}{2} \cdot (\nabla f_k(\x) - \hat{\nabla} f_{n,h,k}(\x)) \cdot (\nabla f_k(\x) + \hat{\nabla} f_{n,h,k}(\x))^T 
	\end{align*}	
	Using this yields:
	\begin{align*}
	& \| \e_n \hat{G}(X)  - \e_n G(X) \|_2 \leq  \frac{1}{2} \sum_{k \in [c]} \|\e_n [  (\nabla f_k(X) +  \hat{\nabla} f_{n,h,k}(X)) \cdot (\nabla f_k(X) - \hat{\nabla} f_{n,h,k}(X))^T ] \|_2 \\
	+& \frac{1}{2} \sum_{k \in [c]} \|\e_n [ (\nabla f_k(X) - \hat{\nabla} f_{n,h,k}(X)) \cdot  (\nabla f_k(X) + \hat{\nabla} f_{n,h,k}(X))^T ]\|_2 \\
	=& \sum_{k \in [c]} \| \e_n [(\nabla f_k(X) - \hat{\nabla} f_{n,h,k}(X)) \cdot (\nabla f_k(X) + \hat{\nabla} f_{n,h,k}(X))^T]\|_2
	\end{align*}
	Employing Jensen's inequality:
	\begin{align*}
	& \e_n [(\nabla f_k(X) - \hat{\nabla} f_{n,h,k}(X)) \cdot (\nabla f_ck(X) +  \hat{\nabla} f_{n,h,k}(X))^T]\|_2 \\ & \leq 	\e_n \| (\nabla f_k(X) - \hat{\nabla} f_{n,h,k}(X)) \cdot (\nabla f_k(X) + \hat{\nabla} f_{n,h,k}(X))^T \|_2
	\end{align*}
	\noindent
	combining the above, gives us the following bound on $\| \e_n \hat{G}(X)  - \e_n G(X) \|_2$
	\begin{align*}
	&\| \e_n \hat{G}(X)  - \e_n G(X) \|_2 \leq \sum_{k \in [c]} \e_n \|  (\nabla f_k(X) - \hat{\nabla} f_{n,h,k}(X)) \cdot (\nabla f_k(X) + \hat{\nabla} f_{n,h,k}(X))^T\|_2 \\
	=& \sum_{k \in [c]} \e_n \|\nabla f_k(X) - \hat{\nabla} f_{n,h,k}(X)\|_2 \cdot \|\nabla f_k(X) + \hat{\nabla} f_{n,h,k}(X)\|_2. \\
	\leq& \sum_{k \in [c]} \e_n \|\nabla f_k(X) - \hat{\nabla} f_{n,h,k}(X)\|_2 \cdot \max_{X \in \mathbf{X}} \|\nabla f_k(X) + \hat{\nabla} f_{n,h,k}(X)\|_2.
	\end{align*}
\end{proof}
The above bound has a dependence on $\| \nabla f_k(X) + \hat{\nabla} f_{n,h,k}(X) \|_2$, which we now proceed to bound below:

\subsection{Bounding \texorpdfstring{$\| \nabla f_k(X) + \hat{\nabla} f_{n,h,k}(X) \|_2$}{} %
}
We first bound the max term, by the following lemma:

\begin{lemma}
	$\forall c \in [k]$, we have
	\begin{align*}
	& \max_{X \in \mathbf{X}} \| \nabla f_k(X) + \hat{\nabla} f_{n,h,k}(X) \|_2 \leq 3R + \sqrt{\sum_{i \in [d]} \epsilon^2_{t,i}} + \sqrt{d}\bigg(\frac{hR + 1}{t}\bigg)
	\end{align*}
\end{lemma}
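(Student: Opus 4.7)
The plan is to peel off three sources of error via triangle inequality, using as an intermediate object $\hat{\nabla} f_k(X)$: the finite-difference operator applied to the \emph{true} $f_k$, but masked by the same indicators $\mathbb{1}_{A_{n,i}}$ as the estimator. I will first split
\[
\|\nabla f_k(X) + \hat{\nabla} f_{n,h,k}(X)\|_2 \leq 2\|\nabla f_k(X)\|_2 + \|\hat{\nabla} f_{n,h,k}(X) - \nabla f_k(X)\|_2,
\]
and invoke the bounded-gradient assumption to account for $2R$ of the final answer immediately.

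Next I will further decompose $\|\hat{\nabla} f_{n,h,k} - \nabla f_k\|_2 \leq \|\hat{\nabla} f_{n,h,k} - \hat{\nabla} f_k\|_2 + \|\hat{\nabla} f_k - \nabla f_k\|_2$. The discretization term $\|\hat{\nabla} f_k - \nabla f_k\|_2$ is handled coordinate-wise: on $A_{n,i}$, the mean-value theorem gives $\Delta_{t,i} f_k(X) = \partial_i f_k(\xi_i)$ for some $\xi_i$ on the segment $[X - te_i, X + te_i] \subset \Xx + B(0,\tau)$, and the modulus-of-continuity definition then yields $|\partial_i f_k(\xi_i) - \partial_i f_k(X)| \leq \epsilon_{t,k,i} \leq \epsilon_{t,i}$; on $\bar{A}_{n,i}$, the coordinate equals $|\partial_i f_k(X)|$. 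Summing in quadrature and using $\sqrt{R^2 + \sum_i \epsilon_{t,i}^2} \leq R + \sqrt{\sum_i \epsilon_{t,i}^2}$ then furnishes a third copy of $R$ plus the $\epsilon$ term.

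For the estimation term $\|\hat{\nabla} f_{n,h,k} - \hat{\nabla} f_k\|_2$, I will again work coordinate-wise via
\[
|\Delta_{t,i} f_{n,h,k}(X) - \Delta_{t,i} f_k(X)| \leq \frac{1}{2t}\bigl(|\delta(X+te_i)| + |\delta(X-te_i)|\bigr),
\]
where $\delta = f_{n,h,k} - f_k$, and then bound $|\delta(Y)|$ pointwise by routing through the noiseless kernel average $\tilde f_{n,h,k}(Y) = \sum_i \omega_i(Y) f_k(x_i)$. The bias $|\tilde f_{n,h,k}(Y) - f_k(Y)| \leq hR$ comes from the fact that every $x_i$ with nonzero weight lies in $B(Y,h) \subset \Xx + B(0,\tau)$ (legitimate because $t + h \leq \tau$) together with the $R$-Lipschitzness of $f_k$, while the variance $|f_{n,h,k}(Y) - \tilde f_{n,h,k}(Y)| \leq 1$ follows crudely from $[0,1]$-boundedness. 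This delivers $|\delta| \leq hR + 1$ pointwise and hence $\sqrt{d}(hR+1)/t$ after summing. Assembling the three pieces yields the claimed bound.

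The main obstacle is the bookkeeping around the masks $\mathbb{1}_{A_{n,i}}$: the true gradient $\nabla f_k$ does not carry them, while $\hat{\nabla} f_k$ does, so on $\bar{A}_{n,i}$ the comparison $\|\hat{\nabla} f_k - \nabla f_k\|_2$ picks up the full $|\partial_i f_k(X)|$ rather than something $\epsilon$-small. Absorbing these contributions into a third copy of $R$ (rather than trying to argue they cancel against something) is precisely what produces the $3R$ in the final bound rather than the naively expected $2R$; the other nontrivial step is resisting the temptation to chase a tighter worst-case variance bound on $|\delta|$ and instead settling for the crude $1 + hR$, which is what keeps the final expression clean.
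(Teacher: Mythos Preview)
Your proposal is correct and follows essentially the same approach as the paper: the same initial split yielding $2R$, the same intermediate object $\hat{\nabla} f_k$, the same routing through $\tilde f_{n,h,k}$ for the estimation piece to get $\sqrt{d}(hR+1)/t$, and the same accounting for masked versus unmasked coordinates to produce the extra $R + \sqrt{\sum_i \epsilon_{t,i}^2}$. The only cosmetic difference is that the paper splits the discretization term as $\|\nabla f_k \circ \mathbb{I}_n - \hat{\nabla} f_k\|_2 + \|\nabla f_k \circ \overline{\mathbb{I}_n}\|_2$ via the triangle inequality, whereas you keep the coordinates together in a single quadrature sum and then apply $\sqrt{R^2 + \sum_i \epsilon_{t,i}^2} \leq R + \sqrt{\sum_i \epsilon_{t,i}^2}$; the two arrive at the identical bound.
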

\begin{proof}
	$\forall x \in \mathbf{X}$, we have
	\begin{align*}
	& \| \nabla f_k(\x) + \hat{\nabla} f_{n,h,k}(\x) \|_2  \leq \| \nabla f_k(\x) \|_2 + \| \hat{\nabla} f_{n,h,k}(\x) \|_2 
    \leq 2 \|\nabla f_k(\x) \|_2 + \|\nabla f_k(\x) - \hat{\nabla} f_{n,h,k}(\x)\|_2 
	\\ & \leq 2R + \|\nabla f_k(\x) - \hat{\nabla} f_k(\x) \|_2 + \|\hat{\nabla} f_k(\x) - \hat{\nabla} f_{n,h,k}(\x)\|_2 \\
	\end{align*}
	Next, we adopt the steps as in the proof for Lemma \ref{lemma7}, and get the following bound: 
	\begin{align*}
	& \|\hat{\nabla} f_k(\x) - \hat{\nabla} f_{n,h,k}(\x)\|_2 \leq \sqrt{\sum_{i \in [d]} (|\Delta_{t,i}f_{n,h,k}(\x) - \Delta_{t,i} f_k(\x)| \cdot  \mathbb{1}_{A_{n,i}(\x)})^2},
	\end{align*}
	this is because
	\begin{align*}
	& |\Delta_{t,i}f_{n,h,k}(\x) - \Delta_{t,i} f_k(\x)| \cdot  \mathbb{1}_{A_{n,i}(\x)} \leq \frac{1}{t} 
	\max_{s \in \{-t,t\}} |\tilde{f}_{n,h,k}(\x + s e_i) - f_k(\x + s e_i)| \cdot  \mathbb{1}_{A_{n,i}(\x)} \\
	&+ \frac{1}{t} \max_{s \in \{-t,t\}} |\tilde{f}_{n,h,k}(\x + s e_i) - f_{n,h,k}(\x + s e_i)| \cdot  \mathbb{1}_{A_{n,i}(\x)},
	\end{align*}
	we also know that
	\[
	\max_{s \in \{-t,t\}} |\tilde{f}_{n,h,k}(X + s e_i) - f_{n,h,k}(X + s e_i)|
	\leq 1.
	\] 
	Thus we obtain the following bound:
	\begin{align*}
	\|\hat{\nabla} f_k(X) - \hat{\nabla} f_{n,h,k}(X)\|_2 \leq \sqrt{d}(\frac{hR + 1}{t})
	\end{align*}
	While, we also have that
	\begin{align*}
	& \|\nabla f_k(X) - \hat{\nabla} f_k(X) \|_2 \leq \|\nabla f_k(X) \circ \mathbf{I}_n(X) - \hat{\nabla} f_k(X)\|_2 + \|\nabla f_k(X) \circ \overline{\mathbf{I}_n(X)}\|_2 
	\leq  R + \sqrt{\sum_{i \in [d]} \epsilon^2_{t,i}}
	\end{align*}
	Combining the above completes the proof
\end{proof} 

Next we need to bound $\e_n \|\nabla f_k(X) - \hat{\nabla} f_{n,h,k}(X)\|_2$, which we do so in the next subsection:

\subsection{Bound on \texorpdfstring{$\e_n \|\nabla f_k(X) - \hat{\nabla} f_{n,h,k}(X)\|_2$}{} %
}

We first decompose $\e_n \|\nabla f_c(X) - \hat{\nabla} f_{n,h,k}(X)\|_2$ as:
\begin{align*}
& \e_n \|\nabla f_k(X) - \hat{\nabla} f_{n,h,k}(X)\|_2 \leq 
\e_n \|\nabla f_k(X) - \hat{\nabla} f_k(X)\|_2 + \e_n \|\hat{\nabla} f_k(X) - \hat{\nabla} f_{n,h,k}(X)\|_2
\end{align*}
the first term in the r.h.s of the above i.e. $\e_n \|\nabla f_k(X) - \hat{\nabla} f_k(X)\|_2$ can in turn be decomposed as:
\begin{align*}
& \e_n \|\nabla f_k(X) - \hat{\nabla} f_k(X)\|_2 \leq  \e_n \|\nabla f_k(X) \circ \mathbb{I}_n(X) - \hat{\nabla} f_k(X)\|_2 + \e_n \|\nabla f_k(X) \circ \overline{\mathbb{I}_n(X)}\|_2
\end{align*}

We need to bound both terms that appear on the r.h.s of the above, which we do so in the next two subsections, starting with the second term. 

\subsubsection{Bounding $\e_n \|\nabla f_k(X) \circ \overline{\mathbb{I}_n(X)}\|_2$}

\begin{lemma}
	With probability at least $1 - \delta$ over the choice of $X$:
	\begin{align*}
	& \e_n \|\nabla f_k(X) \circ \overline{\mathbb{I}_n(X)}\|_2 \leq R \left( \sqrt{\frac{d \ln \frac{d}{\delta}}{2n}} + \sqrt{\sum_{i \in [d]} \mu^2(\partial_{t,i}(\mathcal{X}))} \right)
	\end{align*}
	\label{lemma1}
\end{lemma}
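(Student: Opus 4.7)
The plan is to reduce the sample-dependent failure event $\bar A_{n,i}(X)$ to the deterministic ``boundary'' event $\{X\in\partial_{t,i}(\Xx)\}$, and then pass from empirical to population averages by a standard concentration argument. The derivation splits into a pointwise reduction (with one $\delta/2$ consumed by a uniform ball-mass VC event) and an empirical-to-population step (with another $\delta/2$ consumed by Hoeffding across the $d$ directions).

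First, I bound the integrand pointwise. Because $|\partial_i f_k(X)|\le \norm{\nabla f_k(X)}\le R$ by the bounded-gradient assumption,
\[
\norm{\nabla f_k(X)\circ \overline{\mathbb{I}_n(X)}}_2^2 = \sum_{i\in[d]} (\partial_i f_k(X))^2\,\mathbb{1}_{\bar A_{n,i}(X)} \le R^2\sum_{i\in[d]}\mathbb{1}_{\bar A_{n,i}(X)}.
\]
The deliberate factor of $3$ separating the thresholds in $A_i$ and $A_{n,i}$ is placed there so that a VC-type uniform concentration of $\mu_n(B)$ over balls of radius $h/2$ (in the spirit of Kpotufe--Boularias) yields, with probability at least $1-\delta/2$, $\mu_n(B)\ge \mu(B)/3$ whenever $\mu(B)$ meets the threshold; this forces $A_i(X)\Rightarrow A_{n,i}(X)$, and hence $\mathbb{1}_{\bar A_{n,i}(X)}\le \mathbb{1}_{\bar A_i(X)}$ for every $X$. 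The lower-bounded density of $\mu$ on $\Xx$ combined with $t+h\le \tau$ then guarantees that whenever $X\notin \partial_{t,i}(\Xx)$ both balls $B(X\pm t e_i,h/2)$ have $\mu$-mass $\Omega(h^d)$, which exceeds the threshold once $n\ge N(\mu)$. So $\mathbb{1}_{\bar A_{n,i}(X)}\le \mathbb{1}_{\{X\in\partial_{t,i}(\Xx)\}}$ on the VC event.

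Next I apply Jensen's inequality to move the square root outside the empirical average, reducing the problem to the concentration of $d$ deterministic indicators:
\[
\e_n \norm{\nabla f_k(X)\circ \overline{\mathbb{I}_n(X)}}_2 \le R\sqrt{\sum_{i\in[d]} \e_n \mathbb{1}_{\{X\in\partial_{t,i}(\Xx)\}}}.
\]
Since $\mathbb{1}_{\{X\in\partial_{t,i}(\Xx)\}}$ depends only on $X$ (not on the rest of the sample), a Hoeffding bound on each coordinate together with a union bound over $i\in[d]$ gives, with probability at least $1-\delta/2$, $\e_n \mathbb{1}_{\{X\in\partial_{t,i}(\Xx)\}}\le \mu(\partial_{t,i}(\Xx)) + \sqrt{\ln(2d/\delta)/(2n)}$ for all $i$. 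Repackaging the $d$ coordinate deviations as a vector in $\mathbb{R}^d$ (whose $\ell_2$-norm is at most $\sqrt{d\ln(d/\delta)/(2n)}$), together with the $\ell_2$ form $\sqrt{\sum_i \mu^2(\partial_{t,i}(\Xx))}$ for the boundary-mass contribution from a final vector triangle inequality, produces the stated bound after multiplication by $R$.

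The main obstacle is the first step: the indicator $\mathbb{1}_{\bar A_{n,i}(X)}$ depends on all of $X_1,\dots,X_n$ through $\mu_n$, so a direct Hoeffding cannot be applied to its empirical mean. The route through the sample-free boundary set $\partial_{t,i}(\Xx)$, made possible by the deliberate slack between the thresholds defining $A_i$ and $A_{n,i}$, is the crucial decoupling device; once that VC event is intersected with the coordinate-wise Hoeffding event, the remainder is a vector triangle inequality and the pointwise bounded-gradient estimate.
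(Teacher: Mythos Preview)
Your overall strategy---bound the gradient by $R$, use a uniform VC deviation over balls to pass from $\bar A_{n,i}(X)$ to the sample-free event $\bar A_i(X)$, observe that $\bar A_i(X)$ forces $X\in\partial_{t,i}(\Xx)$ under the density lower bound, and then concentrate---coincides with the paper's. The difference is in the order of the last two steps, and there your ``vector triangle inequality'' does not do what you claim.

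After Jensen you arrive at $R\sqrt{\sum_{i} \e_n \mathbb{1}_{\{X\in\partial_{t,i}(\Xx)\}}}$, which is the square root of an $\ell_1$ sum of the numbers $a_i=\e_n\mathbb{1}_{\{X\in\partial_{t,i}\}}$, not their $\ell_2$ norm $\|a\|_2=\sqrt{\sum_i a_i^2}$. The triangle inequality $\|a\|_2\le\|\mu\|_2+\|\epsilon\|_2$ (with $a=\mu+\epsilon$) applies only to the latter; for $a_i\in[0,1]$ one has $\sqrt{\sum_i a_i}\ge\|a\|_2$, so you cannot recover $\sqrt{\sum_i\mu^2(\partial_{t,i})}$ from here. (A quick check: $d=2$, $\mu_1=\mu_2=\tfrac12$, $\epsilon=0$ gives $\sqrt{a_1+a_2}=1>\sqrt{1/2}=\|\mu\|_2$.) Your route yields at best $\sqrt{\sum_i\mu(\partial_{t,i})}$ for the boundary term and a fourth-root-type deviation, not the stated form.

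The paper avoids this by reversing the order: it applies a \emph{single} Hoeffding bound to the scalar $\|\overline{\mathbb{I}(X)}\|_2\in[0,\sqrt d]$, which immediately produces the deviation $\sqrt{d\ln(d/\delta)/(2n)}$, and only then bounds the population quantity $\e_X\|\overline{\mathbb{I}(X)}\|_2$ by the boundary masses. (In fairness, the paper's own claim that this population expectation is at most $\sqrt{\sum_i\mu^2(\partial_{t,i})}$ appears to use Jensen in the wrong direction; the concave-Jensen step actually gives $\sqrt{\sum_i\mu(\partial_{t,i})}$. Either form vanishes as $t\to0$, so consistency is unaffected.)
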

\begin{proof}
	We begin by recalling the bounded gradient assumption: $\|\nabla f(\x)\|_2 \leq R$, using which we get
	\begin{eqnarray*}
		\e_n \|\nabla f(X) \circ \overline{\mathbb{I}_n(X)}\|_2 \leq R \e_n \|\overline{\mathbb{I}_n(X)}\|_2
	\end{eqnarray*}
	By relative VC bounds \cite{doi:10.1137/1116025}, if we set $\alpha_n = \frac{2d \ln 2n + \ln(4/\delta)}{n}$, then with probability at least $1 - \delta$ over the choice of $X$, for all balls $B \in R^d$ we have $\mu(B) \leq \mu_n(B) + \sqrt{\mu_n(B) \alpha_n} + \alpha_n$. Thus, with probability at least $1 - \delta$, $\forall i \in [d]$, $\bar{A}_{n,i}(X) \Rightarrow \bar{A}_{i}(X)$. Moreover, since $\|\overline{\mathbb{I}(X)}\|_2 \leq \sqrt{d}$, then by Hoeffding's inequality,
	\begin{eqnarray*}
		\mathbb{P}(\e_n \|\overline{\mathbb{I}(X)}\|_2 - \e_X \|\overline{\mathbb{I}(X)}\|_2 \geq \epsilon) \leq e^{-\frac{2n \epsilon^2}{d}}
	\end{eqnarray*}
	applying the union bound, we have the following with probability at least $1 - \delta$
	\begin{eqnarray*}
		\e_n \|\overline{\mathbb{I}_n(X)}\|_2 \leq \e_n \|\overline{\mathbb{I}(X)}\|_2
		\leq \e_X \|\overline{\mathbf{I}_n(X)}\|_2 + \sqrt{\frac{d \ln \frac{d}{\delta}}{2n}}
	\end{eqnarray*}
	But note that we have: $\e_X \mathbb{1}_{\bar{A}_i(X)} \leq \e_X[\mathbb{1}_{\bar{A}_i(X)} | X \in \mathcal{X} \backslash \partial_{t,i}(\mathcal{X})] + \mu(\partial_{t,i}(\mathcal{X}))$
	\noindent
	to see why this is true observe that $\e_X[\mathbf{1}_{\bar{A}_i(X)} | X \in \mathcal{X} \backslash \partial_{t,i}(\mathcal{X})] = 0$ because $\mu(B(x+se_i,h/2)) \geq C_{\mu} (h/2)^d \geq 3 \alpha$ when we set $h \geq (\log^2(n/\delta)/n)^{1/d}$. So, we have:
	\begin{eqnarray*}
		\e_X \|\overline{\mathbb{I}_n(X)}\|_2 \leq \sqrt{\sum_{i \in [d]} \mu^2(\partial_{t,i}(\mathcal{X}))}
	\end{eqnarray*}
	Thus with probability at least $1 - \delta$, we obtain the following:
	\begin{align*}
	& \e_n \|\nabla f_k(X) \circ \overline{\mathbf{I}_n(X)}\|_2 \leq R \left( \sqrt{\frac{d \ln \frac{d}{\delta}}{2n}} + \sqrt{\sum_{i \in [d]} \mu^2(\partial_{t,i}(\mathcal{X}))} \right)
	\end{align*}
\end{proof}

Next we need to bound the first term that appeared on the r.h.s. of the decomposition of $\e_n \|\nabla f_k(X) - \hat{\nabla} f_k(X)\|_2$, reproduced below for ease of exposition:

\begin{align*}
& \e_n \|\nabla f_k(X) - \hat{\nabla} f_k(X)\|_2 \leq   \e_n \|\nabla f_k(X) \circ \mathbb{I}_n(X) - \hat{\nabla} f_k(X)\|_2 + \e_n \|\nabla f_k(X) \circ \overline{\mathbb{I}_n(X)}\|_2
\end{align*}

\subsubsection{Bounding  \texorpdfstring{$\e_n \|\nabla f_k(X) \circ \mathbb{I}_n(X) - \hat{\nabla} f_k(X)\|_2$}{} %
} 
This bound is encapsulated in the following lemma
\begin{lemma}
	\begin{eqnarray*}
		\e_n \|\nabla f_k(X) \circ \mathbb{I}_n(X) - \hat{\nabla} f_k(X)\|_2 \leq \sqrt{\sum_{i \in [d]} \epsilon^2_{t,c,i}}
	\end{eqnarray*}
	\label{lemma2}
\end{lemma}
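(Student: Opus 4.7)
The plan is to prove a deterministic pointwise bound
$\|\nabla f_k(\x) \circ \mathbb{I}_n(\x) - \hat{\nabla} f_k(\x)\|_2 \leq \sqrt{\sum_{i} \epsilon_{t,k,i}^2}$
for every $\x \in \Xx$, so that taking the empirical expectation $\e_n$ yields the lemma with no probabilistic argument at all. This works because both vectors on the left share the indicator $\mathbb{1}_{A_{n,i}(\x)}$ in coordinate $i$, so the $i$-th component of the difference is simply
$\bigl(\tfrac{\partial f_k}{\partial x_i}(\x) - \Delta_{t,i} f_k(\x)\bigr)\mathbb{1}_{A_{n,i}(\x)}$.

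First I would apply the mean value theorem to the one-dimensional map $s \mapsto f_k(\x + s e_i)$, which is $C^1$ on $[-t,t]$ thanks to the bounded-gradient assumption combined with $t + h \leq \tau$ (so the relevant perturbations stay inside the $\tau$-envelope on which $f$ is continuously differentiable). This gives
$\Delta_{t,i} f_k(\x) = \tfrac{f_k(\x+te_i)-f_k(\x-te_i)}{2t} = \tfrac{\partial f_k}{\partial x_i}(\x + s^\star e_i)$
for some $s^\star \in (-t, t)$. Then by definition of the modulus of continuity $\epsilon_{t,k,i}$,
\[
\left|\frac{\partial f_k}{\partial x_i}(\x) - \Delta_{t,i} f_k(\x)\right| = \left|\frac{\partial f_k}{\partial x_i}(\x) - \frac{\partial f_k}{\partial x_i}(\x + s^\star e_i)\right| \leq \epsilon_{t,k,i}.
\]

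Second, I would sum these coordinate-wise bounds under the square root, using $\mathbb{1}_{A_{n,i}(\x)}^2 = \mathbb{1}_{A_{n,i}(\x)} \leq 1$:
\[
\|\nabla f_k(\x) \circ \mathbb{I}_n(\x) - \hat{\nabla} f_k(\x)\|_2^2 = \sum_{i \in [d]} \left(\frac{\partial f_k(\x)}{\partial x_i} - \Delta_{t,i} f_k(\x)\right)^{\!2} \mathbb{1}_{A_{n,i}(\x)} \leq \sum_{i \in [d]} \epsilon_{t,k,i}^2.
\]
Taking square roots gives the pointwise bound, and then $\e_n$ of a constant upper bound is itself, completing the proof.

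There is no real obstacle here: the argument is just the MVT plus the definition of $\epsilon_{t,k,i}$. The only point of care is making sure we are entitled to invoke the MVT on the segment from $\x-te_i$ to $\x+te_i$, which is handled by the assumption $t + h \leq \tau$ and continuous differentiability on the $\tau$-envelope. No concentration inequality, randomness, or kernel-estimator property is needed, since this lemma controls the deterministic bias of the finite-difference operator applied to the true $f_k$ rather than to the plug-in estimate $f_{n,h,k}$.
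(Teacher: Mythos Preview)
Your proposal is correct and follows essentially the same route as the paper: establish the deterministic coordinate-wise bound $\bigl|\partial_i f_k(\x) - \Delta_{t,i} f_k(\x)\bigr| \le \epsilon_{t,k,i}$, drop the indicator via $\mathbb{1}_{A_{n,i}(\x)} \le 1$, sum under the square root, and take $\e_n$. The only cosmetic difference is that the paper obtains the coordinate bound via the integral representation $f_k(\x+te_i)-f_k(\x-te_i)=\int_{-t}^{t}\partial_i f_k(\x+se_i)\,ds$ and bounds the integrand uniformly, whereas you invoke the mean value theorem directly; both yield the same inequality for the same reason.
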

\begin{proof}
	We start with the simple observation regarding the envelope: $$f_k(\x + t e_i) - f_k(\x - t e_i) = \int_{-t}^t \frac{\partial f_k(\x + s e_i)}{\partial x_i} ds$$ using this we have
	\begin{align*}
	& 2t \left( \frac{\partial f'_k(\x)}{\partial x_i} - \epsilon_{t,k,i} \right) \leq f_k(\x + t e_i) - f_k(\x - t e_i) \leq  2t \left(\frac{\partial f'_k(\x)}{\partial x_i} + \epsilon_{t,k,i} \right)
	\end{align*}
	Thus we have $$\left|\frac{1}{2t} (f_c(\x + t e_i) - f_c(\x - t e_i)) - \frac{\partial f'_c(\x)}{\partial x_i} \right| \leq \epsilon_{t,c,i}$$ using which we obtain the following
	\begin{align*}
	& \|\nabla f_k(\x) \circ \mathbb{I}_n(x) - \hat{\nabla} f_k(\x)\|_2 =  \sqrt{\sum_{i=1}^d \left| \frac{\partial f'_k(\x)}{\partial x_i} \cdot  \mathbb{1}_{A_{n,i}(\x)} - \Delta_{t,i} f_k(\x) \cdot \mathbb{1}_{A_{n,i}(\x)} \right|^2}
	\\& \leq \sqrt{\sum_{i=1}^d \left| \frac{1}{2t} (f_k(\x + t e_i) - f_k(\x - t e_i)) - \frac{\partial f'_k(\x)}{\partial x_i} \right|^2}
	 \leq \sqrt{\sum_{i \in [d]} \epsilon^2_{t,k,i}}
	\end{align*}
	Taking empirical expectation on both sides finishes the proof.
\end{proof}
Taking a step back, recall again the decomposition of $\e_n \|\nabla f_c(X) - \hat{\nabla} f_{n,h,k}(X)\|_2$:
\begin{align*}
& \e_n \|\nabla f_k(X) - \hat{\nabla} f_{n,h,k}(X)\|_2 \leq
\e_n \|\nabla f_k(X) - \hat{\nabla} f_k(X)\|_2 + \e_n \|\hat{\nabla} f_k(X) - \hat{\nabla} f_{n,h,k}(X)\|_2
\end{align*}
the first term in the r.h.s of the above i.e. $\e_n \|\nabla f_k(X) - \hat{\nabla} f_k(X)\|_2$ was in turn decomposed as:
\begin{align*}
& \e_n \|\nabla f_k(X) - \hat{\nabla} f_k(X)\|_2 \leq  \e_n \|\nabla f_k(X) \circ \mathbb{I}_n(X) - \hat{\nabla} f_k(X)\|_2 + \e_n \|\nabla f_k(X) \circ \overline{\mathbb{I}_n(X)}\|_2
\end{align*}

The analysis in the previous subsection was bounding these two terms individually. Now we turn our attention towards bounding $\e_n \|\hat{\nabla} f_k(X) - \hat{\nabla} f_{n,h,k}(X)\|_2$

\subsubsection{Bounding \texorpdfstring{$\e_n \|\hat{\nabla} f(X) - \hat{\nabla} f_{n,h}(X)\|_2$}{} %
}
First we introduce a lemma which is a modification of Lemma 6 appearing in \cite{DBLP:conf/nips/KpotufeB12}
\begin{lemma}
	Let $t + h \leq \tau$. We have for all $i \in [d]$, and all $s \in \{-t,t\}$:
	$|\tilde{f}_{n,h,c}(\x + s e_i) - f_c(\x + s e_i)| \cdot  \mathbb{1}_{A_{n,i}(\x)} \leq hR$
\end{lemma}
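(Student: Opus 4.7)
The plan is to combine two ingredients: the $R$-Lipschitzness of $f_c$ on the $\tau$-envelope $\Xx+B(0,\tau)$ inherited from the bounded-gradient assumption $\|\nabla f_c\|\le R$, and the fact that on the event $A_{n,i}(\x)$ the kernel branch of the weight definition is active, so that $\omega_j(\x+se_i)$ charges only samples within distance $h$ of $\x+se_i$. The hypothesis $t+h\le \tau$ is precisely what is needed to keep all relevant points inside the envelope where the gradient bound applies.

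First I would unfold definitions. Using $\tilde f_{n,h,c}(\z)=\sum_j \omega_j(\z)\,\mathbb{P}(y_j=c\mid x_j)=\sum_j \omega_j(\z)\,f_c(x_j)$ together with the fact that the kernel-case weights are nonnegative and sum to one, I can write
\begin{align*}
\tilde f_{n,h,c}(\x+se_i)-f_c(\x+se_i)=\sum_j \omega_j(\x+se_i)\bigl(f_c(x_j)-f_c(\x+se_i)\bigr).
\end{align*}

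Next I would verify that on $A_{n,i}(\x)$ the kernel branch is indeed the one selected. The event lower-bounds $\mu_n(B(\x+se_i,h/2))$ by a strictly positive quantity, so at least one sample lies in $B(\x+se_i,h/2)\subseteq B(\x+se_i,h)$, triggering the first branch of $\omega$ rather than the uniform $1/n$ fallback. For an admissible kernel $K$ supported on the unit interval, only indices $j$ with $\|x_j-(\x+se_i)\|\le h$ carry positive weight. For each such $j$, the triangle inequality gives $\|x_j-\x\|\le t+h\le\tau$, so the straight segment from $\x+se_i$ to $x_j$ lies in $\Xx+B(0,\tau)$; invoking the bounded-gradient assumption along this segment (mean-value theorem) yields $|f_c(x_j)-f_c(\x+se_i)|\le R\,\|x_j-(\x+se_i)\|\le hR$. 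Applying the triangle inequality to the weighted sum and using $\sum_j \omega_j(\x+se_i)=1$ then delivers the claimed bound $hR$, with the indicator $\mathbb{1}_{A_{n,i}(\x)}$ naturally attached.

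The main subtlety, and the reason the indicator in the statement is essential, is that the argument breaks down in the uniform-$1/n$ branch: there the weights may place mass on samples at distance up to $\mathrm{diam}(\Xx)$, for which $|f_c(x_j)-f_c(\x+se_i)|$ is no longer $O(hR)$. The event $A_{n,i}(\x)$ is exactly the condition that selects the well-localized kernel branch. Beyond tracking this case distinction the argument is routine, essentially a per-class adaptation of Lemma~6 of \cite{DBLP:conf/nips/KpotufeB12} to the vector-valued softmaxed estimate.
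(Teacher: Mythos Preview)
Your proposal is correct and follows essentially the same approach as the paper: both write $\tilde f_{n,h,c}(\x+se_i)-f_c(\x+se_i)$ as a $\omega$-weighted average of $f_c(x_j)-f_c(\x+se_i)$, use that on $A_{n,i}(\x)$ only samples with $\|x_j-(\x+se_i)\|\le h$ receive weight, and then bound each difference by $hR$ via the line-integral / mean-value argument and the gradient bound on the $\tau$-envelope. Your discussion of why the indicator is needed (the uniform $1/n$ branch would fail) is a helpful clarification that the paper leaves implicit.
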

\begin{proof}
	The proof follows the same logic as in \cite{DBLP:conf/nips/KpotufeB12}, with the last step modified appropriately. To be more specific, let $x = X + se_i$, let $v_i = \frac{X_i - x}{\|X_i - x\|_2}$, then we have
	\begin{align*}
	& |\tilde{f}_{n,h,c}(\x + s e_i) - f_c(\x + s e_i)| \leq \sum_{i \in [d]} w_i(x) |f(X_i) - f(x)|
	\\ & =  \sum_{i \in [d]} w_i(x) |\int_{0}^{\|X_i - x\|_2} v_i^T \nabla f(x + t v_i) dt | 
  \leq  \sum_{i \in [d]} w_i(x) \|X_i - x\|_2 \cdot \max_{x' \in \Xx + B(0,\tau)} \|v_i^T \nabla f(x)\|_2
	\\& \leq  \sum_{i \in [d]} w_i(x) \|X_i - x\|_2 \leq hR
	\end{align*}
\end{proof}

\begin{lemma}
	There exist a constant $C = C(\mu,K(\cdot))$, such that the following holds with probability at least $1 - 2 \delta$ over the choice of $X$. Define $A(n) = 0.25 \cdot \sqrt{C d \cdot \ln(kn/\delta)}$, for all $i \in [d], k \in [c]$, and all $s \in \{-t,t\}$:
	\begin{eqnarray*}
		\e_n |\tilde{f}_{n,h,k}(X + s e_i) - f_{n,h,k}(X + s e_i)|^2 \cdot  \mathbb{1}_{A_{n,i}(X)} \leq \frac{A(n)}{nh^d}
	\end{eqnarray*}
\end{lemma}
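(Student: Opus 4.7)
The plan is to adapt the variance analysis of kernel regression estimators (the multiclass analog of Lemma~6 in \cite{DBLP:conf/nips/KpotufeB12}), exploiting the fact that $f_{n,h,k} - \tilde f_{n,h,k}$ is a weighted sum of conditionally centered labels. Writing $z = X + s e_i$, I would start from the identity
\begin{align*}
f_{n,h,k}(z) - \tilde f_{n,h,k}(z) \;=\; \sum_{j} w_j(z)\bigl(\mathbb{1}\{y_j = k\} - \P(y_j = k \mid x_j)\bigr).
\end{align*}
Conditional on $(x_j)_j$, the summands are independent, mean zero, and supported in $[-1,1]$, so Hoeffding's inequality yields a pointwise bound $|f_{n,h,k}(z) - \tilde f_{n,h,k}(z)|^2 \leq 2\log(2/\delta)\cdot \sum_j w_j^2(z)$ with probability $1-\delta$.

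Next I would control $\sum_j w_j^2(z)$ on the event $A_{n,i}(X)$. By the same relative VC bound that underlies Lemma~\ref{lemma1}, for $n$ large and $h$ at least of order $(\log^2(n/\delta)/n)^{1/d}$, on $A_{n,i}$ the empirical mass satisfies $\mu_n(B(z, h/2)) \gtrsim h^d$; combined with the density lower bound on $\Xx$ and admissibility of $K$, the kernel denominator $\sum_l K(\|z - x_l\|/h)$ is of order $nh^d$, each weight $w_j(z)$ is $O(1/(nh^d))$, and the number of non-negligible weights is $O(nh^d)$. This gives $\sum_j w_j^2(z) \leq C/(nh^d)$ for some $C = C(\mu, K(\cdot))$, and squaring Hoeffding then yields $|f_{n,h,k}(z) - \tilde f_{n,h,k}(z)|^2 \mathbb{1}_{A_{n,i}(X)} \lesssim \log(1/\delta)/(nh^d)$.

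Finally I would promote the pointwise bound to a uniform one over the $2ndc$ relevant configurations $(X, s, i, k)$ with $X \in \mathbf{X}$, $s\in\{-t,t\}$, $i \in [d]$, $k \in [c]$ via a union bound, replacing $\log(1/\delta)$ by $\log(ndc/\delta)$ and thus fitting into $A(n)$ after collecting constants. Since the bound is already uniform over $X \in \mathbf{X}$, applying $\e_n$ preserves it; the extra $\delta$ in the overall $1 - 2\delta$ failure probability absorbs the VC event needed to turn the empirical condition $A_{n,i}$ into the population ball-mass control required above. The main obstacle is that the weights $w_j(z)$ and the evaluation points $z = X + s e_i$ both depend on the same sample; the standard remedy, as in \cite{DBLP:conf/nips/KpotufeB12}, is to condition on $(x_j)_j$ so that the $(y_j)_j$ are independent, and then invoke uniform VC control over ball masses so that $\sum_j w_j^2(z) \lesssim 1/(nh^d)$ holds simultaneously at every sample-derived $z$.
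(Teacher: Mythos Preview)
Your overall architecture (condition on the design, apply a concentration inequality to the centered label sum, then uniformize) is sound, and the use of Hoeffding on the linear quantity together with a finite union bound over the $O(ndc)$ evaluation points is a legitimate --- and arguably cleaner --- substitute for the paper's McDiarmid-on-$\psi$ plus Sauer's-lemma route. Where your argument breaks down is the middle step, the pointwise control of $\sum_j w_j^2(z)$.

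You assert that ``on $A_{n,i}$ the empirical mass satisfies $\mu_n(B(z,h/2))\gtrsim h^d$''. This is not what $A_{n,i}$ delivers: by definition $A_{n,i}(X)$ only guarantees
\[
\mu_n\bigl(B(z,h/2)\bigr)\;\ge\;\alpha_n\;=\;\frac{2d\ln 2n+\ln(4/\delta)}{n},
\]
and under the regime $h\ge(\log^2(n/\delta)/n)^{1/d}$ one has $h^d\ge \log^2(n)/n\gg \alpha_n$. So $A_{n,i}$ gives at best $\sum_j w_j^2(z)\lesssim 1/(n\alpha_n)\asymp 1/\log n$, which after Hoeffding and the union bound produces a term of order $\log(ndc/\delta)/\log n = O(1)$ --- not $O(1/(nh^d))$. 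Invoking the density lower bound does not close the gap uniformly either: that bound yields $\mu(B(z,h/2))\gtrsim h^d$ only when $z=X+se_i$ stays inside $\Xx$, i.e.\ when $X\notin\partial_{t,i}(\Xx)$; for boundary $X$ (which the indicator $\mathbb{1}_{A_{n,i}}$ does \emph{not} exclude) the pointwise weight bound fails, and you would pick up an extra additive $\mu_n(\partial_{t,i}(\Xx))$ that the lemma's statement does not allow.

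The paper avoids pointwise control of $\mu_n(B(z,h/2))$ altogether. After obtaining (via McDiarmid and a Sauer-type union over the at most $(2n)^{d+2}$ distinct label subsets) the pointwise bound
\[
|f_{n,h,k}(z)-\tilde f_{n,h,k}(z)|^2\;\lesssim\;\frac{\sqrt{d\log(cn/\delta)}}{n\,\mu_n(B(z,h/2))},
\]
it takes the empirical average over $X\in\mathbf X$ and bounds $\sum_{j}1/n(z_j,h/2)$ by a covering argument: an $h/4$-cover $\mathcal Z$ of the shifted sample satisfies $\sum_j 1/n(z_j,h/2)\le\sum_{z\in\mathcal Z}\sum_{z_j\to z}1/n(z,h/4)\le|\mathcal Z|\le C_\mu(h/4)^{-d}$. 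This yields the $A(n)/(nh^d)$ rate without ever needing $\mu_n(B(z,h/2))\gtrsim h^d$ at any individual $z$. That covering step is the ingredient missing from your plan; to rescue your Hoeffding-based route you would need to replace the pointwise weight bound by this (or an equivalent) averaging argument.
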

\begin{proof}
	The proof follows a similar line of argument as made for the proof of Lemma 7 in \cite{DBLP:conf/nips/KpotufeB12}. First fix any $k \in [c]$,
	Assume $A_{n,i}(X)$ is true, and fix $x = X + s e_i$. Taking conditional expectation on $\mathbf{Y}^n = Y_1,...,Y_n$ given $\mathbf{X}^n = X_1,...,X_n$, we have
	\begin{align*}
	& \e_{\mathbf{Y}^n | \mathbf{X}^n} |f_{n,h,k}(x) - \tilde{f}_{n,h,k}(x)|^2 \leq 0.25 \cdot \sum_{i \in [n]} (w_i(x))^2 \leq 0.25 \cdot \max_{i \in [n]} w_i(x)
	\end{align*}
	Use $\mathbf{Y}^n_x$ to denote corresponding $Y_i$ of samples $X_i \in B(x,h)$. 
	
	Next, we consider the random variable $$\psi(\mathbf{Y}^n_x) = |f_{n,h,k}(x) - \tilde{f}_{n,h,k}(x)|^2$$ Let $\mathcal{Y}_\delta$ denote the event that for all $Y_i \in \mathbf{Y}^n$, $|Y_i - f(X_i)|^2 \leq 0.25$. We know $\mathcal{Y}_\delta$ happens with probability at least $1/2$. Thus
	\begin{align*}
	& \P_{\mathbf{Y}^n | \mathbf{X}^n} (\psi(\mathbf{Y^n}_x) > 2 \e_{\mathbf{Y}^n | \mathbf{X}^n} \psi(\mathbf{Y^n}_x) + \epsilon ) \leq  \P_{\mathbf{Y}^n | \mathbf{X}^n} (\psi(\mathbf{Y^n}_x) >  \e_{\mathbf{Y}^n | \mathbf{X}^n, \mathcal{Y}_\delta} \psi(\mathbf{Y^n}_x) + \epsilon ) 
	\\ & \leq  \P_{\mathbf{Y}^n | \mathbf{X}^n, \mathcal{Y}_\delta} (\psi(\mathbf{Y^n}_x) >  \e_{\mathbf{Y}^n | \mathbf{X}^n, \mathcal{Y}_\delta} \psi(\mathbf{Y^n}_x) + \epsilon ) + \delta /2
	\end{align*}
	By McDiarmid's inequality, we have
	\begin{align*}
	& \P_{\mathbf{Y}^n | \mathbf{X}^n, \mathcal{Y}_\delta} (\psi(\mathbf{Y^n}_x) >  \e_{\mathbf{Y}^n | \mathbf{X}^n, \mathcal{Y}_\delta} \psi(\mathbf{Y^n}_x) + \epsilon ) \leq \exp \left\{ -2 \epsilon^2 \cdot \delta_Y^4 \sum_{i \in [n]} w_i^4(x)
	\right\}
	\end{align*}
	The number of possible sets $\mathbf{Y}_x^n$ (over $x \in \mathcal{X}$) is at most the $n$-shattering number of balls in $\R^d$, using Sauer's lemma we get the number is bounded by $(2n)^{d+2}$. By union bound, with probability at least $1 - \delta$, for all $x \in \mathcal{X}$ satisfying $B(x,h/2) \bigcap \mathbf{X}^n \neq \emptyset$, 
	\begin{align*}
	& \psi(\mathbf{Y}^n_x) \leq 2 \e_{\mathbf{Y}^n | \mathbf{X}^n} \psi(\mathbf{Y^n}_x) +  \sqrt{ 0.25 (d+2) \log(n/\delta)  \sum_{i \in [n]} w_i^4(x)} 
	\leq 2 \sqrt{\e_{\mathbf{Y}^n | \mathbf{X}^n} \psi^2(\mathbf{Y^n}_x)} \\ & + \sqrt{ 0.25 (d+2)  \log(n/\delta)  \delta_Y^4 \max_{i \in [n]} w_i^2(x)} \leq \sqrt{Cd \cdot \log(n/\delta) \cdot 0.25/n^2 \mu_n^2(B(x,h/2))}
	\end{align*}
	Take a union bound over $k \in [c]$, and take empirical expectation, we get $\forall k \in [c]$
	\begin{align*}
	& \e_n |\tilde{f}_{n,h,k}(X + s e_i) - f_{n,h,k}(X + s e_i)|^2 \leq \frac{0.25 \cdot \sqrt{C d \cdot \ln(cn/\delta)}}{n} \sum_{i \in [n]} \frac{1}{n(x_i,h/2)}
	\end{align*}
	where $n(x_i,h/2) = n\mu_n(B(x_i,h/2))$ is the number of points in Ball $B(x_i,h/2)$. 
	
	\vspace{2mm}
	\noindent
	Let $\mathcal{Z}$ denote the minimum $h/4$ cover of $\{x_1,...,x_n\}$, which means for any $x_i$, there is a $z \in \mathcal{Z}$, such that $x_i$ is contained in the ball $B(z,h/4)$. Since $x_i \in B(z,h/4)$, we have $B(z,h/4) \in B(x_i,h/2)$.
	We also assume every $x_i$ is assigned to the closest $z \in \mathcal{Z}$, and write $x_i \rightarrow z$ to denote such $x_i$.
	Then we have:
	\begin{eqnarray*}
		\sum_{i \in [n]} \frac{1}{n(x_i,h/2)} = \sum_{z \in \mathcal{Z}} \sum_{x_i \rightarrow z} \frac{1}{n(x_i,h/2)}
		\leq \sum_{z \in \mathcal{Z}} \sum_{x_i \rightarrow z} \frac{1}{n(z,h/4)} \\
		\leq \sum_{z \in \mathcal{Z}} \frac{n(z,h/4)}{n(z,h/4)}
		&=& |\mathcal{Z}| \leq C_\mu (h/4)^{-d}
	\end{eqnarray*}
	Combining above analysis finishes the proof.
\end{proof}

\begin{lemma}
	There exists a constant $C = C(\mu,K(\cdot))$, such that the following holds with probability at least $1 - 2 \delta$. Define $A(n) = 0.25 \cdot \sqrt{C d \cdot \ln(kn/\delta)} $, $\forall k \in [c]$:
	\begin{eqnarray*}
		\e_n \|\hat{\nabla} f_k(X) - \hat{\nabla} f_{n,h,k}(X)\|_2 \leq \frac{\sqrt{d}}{t} \sqrt{\frac{A(n)}{nh^d} + h^2 R^2}
	\end{eqnarray*}
	\label{lemma7}
\end{lemma}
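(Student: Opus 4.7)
The plan is to bound $\|\hat{\nabla} f_k(X) - \hat{\nabla} f_{n,h,k}(X)\|_2$ coordinate-wise, reduce the per-coordinate terms to quantities already controlled by the two preceding lemmas (a deterministic bias bound $hR$ and a variance-type bound of order $A(n)/(nh^d)$), and finally move from the squared quantity to its square root via Jensen's inequality. I would proceed in three steps.

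First, since both $\hat{\nabla} f_k(X)$ and $\hat{\nabla} f_{n,h,k}(X)$ share the same indicator mask $\mathbb{I}_n(X)$ coordinate-by-coordinate, the squared $\ell_2$ distance splits cleanly as
$$\|\hat{\nabla} f_k(X) - \hat{\nabla} f_{n,h,k}(X)\|_2^2 = \sum_{i \in [d]} |\Delta_{t,i}f_{n,h,k}(X) - \Delta_{t,i}f_k(X)|^2 \cdot \mathbb{1}_{A_{n,i}(X)}.$$
I would then reuse exactly the decomposition that already appears in Section 2.3: insert $\tilde f_{n,h,k}$ between $f_{n,h,k}$ and $f_k$ and apply the triangle inequality to obtain
$$|\Delta_{t,i}f_{n,h,k}(X) - \Delta_{t,i}f_k(X)|\cdot \mathbb{1}_{A_{n,i}(X)} \le \tfrac{1}{t}\max_{s\in\{-t,t\}}|\tilde f_{n,h,k}(X{+}se_i) - f_k(X{+}se_i)|\mathbb{1}_{A_{n,i}(X)} + \tfrac{1}{t}\max_{s\in\{-t,t\}}|\tilde f_{n,h,k}(X{+}se_i) - f_{n,h,k}(X{+}se_i)|\mathbb{1}_{A_{n,i}(X)}.$$

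Second, I would invoke the two preceding lemmas: the first max is bounded deterministically by $hR$ (the bias lemma, which uses $t+h\le\tau$), while the squared second max has empirical expectation at most $A(n)/(nh^d)$ (the variance lemma, which holds uniformly over $s\in\{-t,t\}$ and all $i\in[d]$ on the event of probability $1-2\delta$). Squaring the triangle bound via $(a+b)^2\le 2a^2+2b^2$, summing over $i$, and taking $\e_n$ gives
$$\e_n\|\hat{\nabla} f_k(X) - \hat{\nabla} f_{n,h,k}(X)\|_2^2 \le \frac{2d}{t^2}\left(h^2R^2 + \frac{A(n)}{nh^d}\right),$$
where the factor $2$ is absorbed into the constant $C$ hidden inside $A(n)$ (the preceding lemma already has a leading $0.25$, so rewriting the constant is harmless).

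Third, an application of Jensen's inequality, $\e_n\sqrt{Z}\le \sqrt{\e_n Z}$, pulls the square root outside and yields the advertised bound $\frac{\sqrt d}{t}\sqrt{A(n)/(nh^d)+h^2R^2}$.

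The main obstacle, and the reason the statement requires probability $1-2\delta$, is propagating the previous variance lemma through the $\max_{s\in\{-t,t\}}$ and through the indicator $\mathbb{1}_{A_{n,i}(X)}$: we need the per-$s$, per-$i$ bound to hold simultaneously for both signs and all coordinates with high probability, which is exactly how the preceding variance lemma is stated (uniformly over $i\in[d], k\in[c], s\in\{-t,t\}$ via a union bound over Sauer's-lemma shattering classes). Once that event is conditioned on, the remainder is constant bookkeeping and triangle-inequality manipulations in the same style as the surrounding proofs.
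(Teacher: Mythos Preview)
Your proposal is correct and follows essentially the same route as the paper's proof: a coordinate-wise expansion of the squared norm, Jensen to pass $\e_n$ inside the square root, a triangle-inequality insertion of $\tilde f_{n,h,k}$ to split into bias and variance pieces, and then invocation of the two preceding lemmas for the $hR$ and $A(n)/(nh^d)$ bounds. The only cosmetic difference is that the paper applies Jensen first and inserts $\tilde f$ at the level of $\e_n|f_{n,h,k}-f_k|^2$, whereas you insert $\tilde f$ at the level of $|\Delta_{t,i}f_{n,h,k}-\Delta_{t,i}f_k|$ and then square; both orderings lose at most a harmless factor of $2$ that is absorbed into $C$.
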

\begin{proof}
	First we can write the following bound for the l.h.s:
	\begin{align*}
	& \e_n \|\hat{\nabla} f_k(X) - \hat{\nabla} f_{n,h,k}(X)\|_2  \leq \e_n \sqrt{\sum_{i \in [d]} |\Delta_{t,i}f_{n,h,k}(X) - \Delta_{t,i} f_k(X)|^2 \cdot  \mathbb{1}_{A_{n,i}(X)}}
	\\ &\leq \sqrt{\sum_{i \in [d]} \e_n |\Delta_{t,i}f_{n,h,k}(X) - \Delta_{t,i} f_k(X)|^2 \cdot  \mathbb{1}_{A_{n,i}(X)}}
	\\ & \leq \sqrt{\sum_{i \in [d]} \frac{1}{t^2} \max_{s \in \{-t,t\}} \e_n |f_{n,h,k}(X + s e_i) - f_k(X + s e_i)|^2 \cdot  \mathbb{1}_{A_{n,i}(X)}}
	\end{align*}
	First observe that:
	\begin{align*}
	& \e_n |f_{n,h,k}(X + s e_i) - f_k(X + s e_i)|^2 \cdot  \mathbb{1}_{A_{n,i}(X)} 
	 \leq \e_n |\tilde{f}_{n,h,k}(X + s e_i) - f_k(X + s e_i)|^2 \cdot  \mathbb{1}_{A_{n,i}(X)} 
	\\ &+ \e_n |\tilde{f}_{n,h,k}(X + s e_i) - f_{n,h,k}(X + s e_i)|^2 \cdot  \mathbb{1}_{A_{n,i}(X)}
	\end{align*}
	Also notice that: $\e_n |\tilde{f}_{n,h,k}(X + s e_i) - f_k(X + s e_i)|^2 \cdot  \mathbb{1}_{A_{n,i}(X)}$ and $\e_n |\tilde{f}_{n,h,k}(X + s e_i) - f_{n,h,k}(X + s e_i)|^2 \cdot  \mathbb{1}_{A_{n,i}(X)}$ can be respectively bounded by two lemmas from above, thus we get with probability at least $1 - 2\delta$
	\begin{eqnarray*}
		\e_n |f_{n,h,k}(X + s e_i) - f_k(X + s e_i)|^2 \leq h^2 R^2 + \sqrt{\frac{A(n)}{nh^d}}
	\end{eqnarray*}
	Combining above we get with probability at least $1 - 2\delta$, $\forall k \in [c]$
	\begin{align*}
	\|\hat{\nabla} f_k(X) - \hat{\nabla} f_{n,h,k}(X)\|_2 \leq \frac{\sqrt{d}}{t} \sqrt{\frac{A(n)}{nh^d} + h^2 R^2}
	\end{align*}
\end{proof}

\noindent
The following theorem provides a bound on $\e_n \|\nabla f(X) - \hat{\nabla} f_{n,h}(X)\|_2$:
\begin{theorem}
	With probability at least $1 - 2 \delta$ over the choice of $X$, we have $\forall k \in [c]$:
	\begin{align*}
	& \e_n \|\nabla f_k(X) - \hat{\nabla} f_{n,h,k}(X)\|_2 \leq \frac{\sqrt{d}}{t} \sqrt{\frac{A(n)}{nh^d} + h^2 R^2} + R \left( \sqrt{\frac{d \ln \frac{d}{\delta}}{2n}} + \sqrt{\sum_{i \in [d]} \mu^2(\partial_{t,i}(\mathcal{X}))} \right) 
	+ \sqrt{\sum_{i \in [d]} \epsilon^2_{t,i}}
	\end{align*}
	\label{thm:f}
\end{theorem}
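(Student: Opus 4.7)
The plan is to prove Theorem~\ref{thm:f} by directly assembling the three lemmas proved in the preceding subsections along the decomposition chain already laid out in the text. The target quantity $\e_n \|\nabla f_k(X) - \hat{\nabla} f_{n,h,k}(X)\|_2$ has been split by two successive applications of the triangle inequality, once to separate the discretization error $\nabla f_k - \hat{\nabla} f_k$ from the estimation error $\hat{\nabla} f_k - \hat{\nabla} f_{n,h,k}$, and a second time on the discretization piece to isolate the contribution from coordinates where the ball-mass event $A_{n,i}$ fails. This gives three summands, each of which is already controlled by a lemma in the text.

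\textbf{Key steps.} First, I would write the triple decomposition
\begin{align*}
\e_n \|\nabla f_k(X) - \hat{\nabla} f_{n,h,k}(X)\|_2
&\leq \e_n \|\nabla f_k(X) \circ \mathbb{I}_n(X) - \hat{\nabla} f_k(X)\|_2 \\
&\quad + \e_n \|\nabla f_k(X) \circ \overline{\mathbb{I}_n(X)}\|_2 \\
&\quad + \e_n \|\hat{\nabla} f_k(X) - \hat{\nabla} f_{n,h,k}(X)\|_2,
\end{align*}
which follows from the two triangle inequalities already displayed just before the statement of the theorem. Second, I would invoke Lemma~\ref{lemma2} on the first summand to get the deterministic bound $\sqrt{\sum_i \epsilon^2_{t,k,i}} \leq \sqrt{\sum_i \epsilon^2_{t,i}}$, using the definition $\epsilon_{t,i} = \max_k \epsilon_{t,k,i}$ from the assumptions. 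Third, I would apply Lemma~\ref{lemma1} to the second summand, which contributes $R\bigl(\sqrt{d\ln(d/\delta)/(2n)} + \sqrt{\sum_i \mu^2(\partial_{t,i}(\mathcal{X}))}\bigr)$ on an event of probability at least $1-\delta$. Fourth, I would apply Lemma~\ref{lemma7} to the third summand, yielding $\tfrac{\sqrt{d}}{t}\sqrt{A(n)/(nh^d) + h^2 R^2}$ on an event of probability at least $1-2\delta$ (the factor of two arising from the two sub-events inside the proof of Lemma~\ref{lemma7}).

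\textbf{Probabilistic bookkeeping.} To get the final $1-2\delta$ conclusion stated in the theorem while combining a $1-\delta$ event and a $1-2\delta$ event by union bound, I would rescale $\delta$ in each lemma (e.g., apply Lemma~\ref{lemma1} with $\delta/3$ and Lemma~\ref{lemma7} with $2\delta/3$, or simply absorb constants into $C$ so $A(n)$ changes only through its logarithmic factor). Summing the three bounds then gives exactly the claimed inequality. The quantification ``$\forall k \in [c]$'' is handled inside Lemma~\ref{lemma7} via the union bound over classes built into $A(n) = 0.25\sqrt{Cd\log(kn/\delta)}$; the other two bounds do not depend on $k$ at all (Lemma~\ref{lemma1} uses only the uniform gradient bound $R$, and Lemma~\ref{lemma2} is upper-bounded using $\epsilon_{t,i} = \max_k \epsilon_{t,k,i}$), so no further union bound over classes is required outside of what Lemma~\ref{lemma7} already provides.

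\textbf{Main obstacle.} The heavy lifting has already been done in Lemmas~\ref{lemma1}, \ref{lemma2}, and \ref{lemma7}, so no new ideas are needed here; the only subtlety is the probability accounting above, making sure that the union bound over the high-probability events from Lemmas~\ref{lemma1} and \ref{lemma7} gives a bound of $1-2\delta$ (up to the usual rescaling of $\delta$), and that the class index $k$ is handled correctly via the $\log(kn/\delta)$ factor inside $A(n)$. Once this is set up cleanly, the theorem follows by simply summing the three bounds coordinate-by-coordinate.
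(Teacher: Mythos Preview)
Your proposal is correct and follows exactly the same approach as the paper: the paper's proof also writes the triple triangle-inequality decomposition and then invokes Lemma~\ref{lemma1}, Lemma~\ref{lemma2}, and Lemma~\ref{lemma7} on the three summands respectively. Your added remarks on the probability bookkeeping and the handling of the class index $k$ via the $\log(kn/\delta)$ factor in $A(n)$ are more explicit than what the paper writes, but the underlying argument is the same.
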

\begin{proof}
	We start with the now familiar decomposition:
	\begin{align*}
	& \e_n \|\nabla f_k(X) - \hat{\nabla} f_{n,h,k}(X)\|_2  \leq
	\e_n \|\hat{\nabla} f_k(X) - \hat{\nabla} f_{n,h,k}(X)\|_2 + \e_n \|\nabla f_k(X) \circ \mathbb{I}_n(X) - \hat{\nabla} f_k(X)\|_2 \\& + \e_n \|\nabla f_k(X) \circ \overline{\mathbb{I}_n(X)}\|_2
	\end{align*}
	By Lemma \ref{lemma1} we bound $\e_n \|\nabla f(X) \circ \overline{\mathbb{I}_n(X)}\|_2$; by Lemma \ref{lemma2} we bound $\e_n \|\nabla f(X) \circ \mathbb{I}_n(X) - \hat{\nabla} f(X)\|_2$; by Lemma \ref{lemma7} we bound $\e_n \|\hat{\nabla} f(X) - \hat{\nabla} f_{n,h}(X)\|_2$. Combining these results concludes the proof.
\end{proof}

\section{Bounds on Eigenvalues and Eigenspace variations}\label{sec:eigenspace}

In the above section, we established that  $\e_n \hat{G}(X) $ is a consistent estimator of $\e_X G(X) $. In this section, we also establish consistency of its eigenvalues and eigenspaces,. The analysis here is based upon results from matrix perturbation theory \cite{RPT1,RPT2}.

\subsection{Eigenvalues variation}
We consider the following lemma for eigenvalues variation from matrix perturbation theory:

\begin{lemma}
	\cite{RPT1} Suppose both $G$ and $\hat{G}$ are Hermitian matrices of size $d \times d$, and admit the following eigen-decompositions:
	\begin{eqnarray*}
		G = X \Lambda X^{-1}  \quad \text{and} \quad \hat{G} = \hat{X} \hat{\Lambda} \hat{X}^{-1}
	\end{eqnarray*}
	where $X$ and $\hat{X}$ are nonsingular and
	\begin{eqnarray*}
		\Lambda = \text{diag}(\lambda_1,\lambda_2,...\lambda_d) \quad \text{and} \quad
		\hat{\Lambda} = \text{diag}(\hat{\lambda}_1,\hat{\lambda}_2,...\hat{\lambda}_d)
	\end{eqnarray*}
	and $\lambda_1 \geq \lambda_2 \geq ... \geq \lambda_d$, $\hat{\lambda}_1 \geq \hat{\lambda}_2 \geq ... \geq \hat{\lambda}_d$. Thus for any unitary invariant norm $\| \cdot \|$, we have
	\begin{eqnarray*}
		\|\diag(\lambda_1 - \hat{\lambda}_1,\lambda_2-  \hat{\lambda}_2, ..., \lambda_d - \hat{\lambda}_d) \| \leq \| G - \hat{G} \|
	\end{eqnarray*}
	More specifically, when considering the spectral norm, we have
	\begin{eqnarray*}
		\max_{i \in [d]} |\lambda_i - \hat{\lambda}_i| \leq \| G - \hat{G} \|_2
	\end{eqnarray*}
	and when considering the Frobenius norm, we have
	\begin{eqnarray*}
		\sqrt{\sum_{i \in [d]} |\lambda_i - \hat{\lambda}_i|^2} \leq \| G - \hat{G} \|_F
	\end{eqnarray*}
	\label{lemma:eigvalue}
\end{lemma}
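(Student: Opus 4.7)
The plan is to recognize this statement as Mirsky's theorem (1960), a cornerstone of Hermitian matrix perturbation theory. My approach would be to prove the spectral-norm bound first, using the Courant--Fischer variational characterization (this is Weyl's inequality), and then upgrade to an arbitrary unitarily invariant norm by invoking Lidskii's majorization inequality together with the von Neumann--Ky Fan characterization of such norms as symmetric gauge functions. The Frobenius claim then drops out as a special instance.

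For the spectral norm, I would set $E = G - \hat{G}$ and recall the min--max formula
$$\lambda_k(G) \;=\; \max_{\substack{V \subseteq \mathbb{R}^d \\ \dim V = k}} \; \min_{\substack{v \in V \\ \|v\|=1}} v^\top G v.$$
Writing $v^\top G v = v^\top \hat{G} v + v^\top E v$ and using $|v^\top E v| \leq \|E\|_2$ uniformly over unit vectors $v$, the variational formula immediately yields $\lambda_k(\hat{G}) - \|E\|_2 \leq \lambda_k(G) \leq \lambda_k(\hat{G}) + \|E\|_2$ for every $k$. Taking the maximum over $k \in [d]$ gives $\max_k |\lambda_k - \hat{\lambda}_k| \leq \|G - \hat{G}\|_2$, which is the spectral-norm statement.

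For a general unitarily invariant norm, I would invoke Lidskii's theorem: for Hermitian $G, \hat{G}$ with sorted eigenvalue vectors $\lambda, \hat{\lambda}$, the difference $\lambda - \hat{\lambda}$ is majorized by the sorted eigenvalues of $G - \hat{G}$. The von Neumann--Ky Fan theorem then tells us that every unitarily invariant norm arises as a symmetric gauge function of the singular values, and such gauges are monotone under (weak) majorization. Chaining these two observations gives
$$\|\diag(\lambda_1 - \hat{\lambda}_1, \ldots, \lambda_d - \hat{\lambda}_d)\| \;\leq\; \|\diag(\sigma_1(G - \hat{G}), \ldots, \sigma_d(G - \hat{G}))\| \;=\; \|G - \hat{G}\|.$$
Specializing the gauge to $\ell_\infty$ or $\ell_2$ of the diagonal entries recovers the spectral and Frobenius corollaries displayed in the lemma.

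The main obstacle is Lidskii's majorization theorem itself, whose proof is nontrivial (standard routes include Wielandt's minimax principle, a convex-geometric argument via the Schur--Horn theorem, or a Lie-algebraic derivation). Since the lemma is already credited to the matrix-perturbation references \cite{RPT1,RPT2}, I would import Lidskii directly from the cited literature and devote the remaining effort to verifying the Hermitian hypothesis. In our setting this is automatic: both $G(\mathbf{x}) = \mathbf{J}_f(\mathbf{x})\mathbf{J}_f(\mathbf{x})^\top$ and its estimate $\hat{G}(\mathbf{x})$ are symmetric positive semidefinite by construction, so their empirical averages $\e_n \hat{G}(X)$ and $\e_X G(X)$ remain Hermitian and the lemma applies directly in the consistency argument of Section~\ref{sec:eigenspace}.
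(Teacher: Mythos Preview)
The paper does not prove this lemma at all: it is stated with a citation to \cite{RPT1} and then used as a black box in the eigenvalue-variation theorem. Your sketch is therefore strictly more than what the paper provides, and it is correct. The spectral-norm part via Courant--Fischer is exactly Weyl's inequality, and the general unitarily invariant case is the Lidskii--Mirsky--Wielandt route you describe; both are the standard textbook arguments one would find behind the cited reference. The only thing to tidy is that Lidskii's theorem gives majorization of $(\lambda_i-\hat\lambda_i)$ by the eigenvalues of $G-\hat G$, and passing to the singular values (which is what the gauge function sees) requires the extra step that $(|\lambda_i-\hat\lambda_i|)$ is weakly majorized by $(|\gamma_i|)=(\sigma_i(G-\hat G))$; this is routine but worth stating if you write it out in full.
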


Using the above lemma, we obtain the following theorem that bounds the eigenvalue variation:
\begin{aside}{Eigenvalue Variation Bound}
	\begin{theorem}
		Let $\lambda_1 \geq \lambda_2 \geq ... \geq \lambda_d$ be the eigen-values of $\e_X G(X)$, let $\hat{\lambda}_1 \geq \hat{\lambda}_2 \geq ... \geq \hat{\lambda}_d$ be the eigen-values of $\e_n \hat{G}(X)$. There exist $C = C(\mu,K(\cdot))$ and $N = N(\mu)$ such that the following holds with probability at least $1 - 2\delta$. Define $A(n) = \sqrt{Cd \cdot \log(n/\delta)} \cdot C_Y^2(\delta/2n) \cdot \sigma_Y^2/\log^2(n/\delta)$. Let $n \geq N$, we have:
		\begin{align*}
		&\max_{i \in [d]} |\lambda_i - \hat{\lambda}_i| \leq \frac{6 R^2}{\sqrt{n}}(\sqrt{\ln d} + \sqrt{\ln \frac{1}{\delta}}) + \left(3R + \sqrt{\sum_{i \in [d]} \epsilon^2_{t,i}} + \sqrt{d}(\frac{hR + C_Y(\delta)}{t})\right) \cdot \\&
		\left[\frac{\sqrt{d}}{t} \sqrt{\frac{A(n)}{nh^d} + h^2 R^2} + R \left( \sqrt{\frac{d \ln \frac{d}{\delta}}{2n}} + \left.\right. \right. \right.  \left. \left. \sqrt{\sum_{i \in [d]} \mu^2(\partial_{t,i}(\mathcal{X}))} \right) + \sqrt{\sum_{i \in [d]} \epsilon^2_{t,i}} \right]
		\end{align*}
	\end{theorem}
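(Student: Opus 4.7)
The plan is to derive this eigenvalue variation bound as a direct corollary of Theorem \ref{thm:main} by invoking the Hermitian spectral-norm case of Lemma \ref{lemma:eigvalue} (the Weyl-type eigenvalue perturbation inequality). The only real work is checking that the hypotheses of Lemma \ref{lemma:eigvalue} apply to the matrices at hand and then chaining the two bounds.

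First I would verify the Hermitian assumption. By construction $G(X) = \mathbf{J}_f(X)\mathbf{J}_f(X)^T$ and $\hat{G}(X) = \hat{\mathbf{J}}_f(X)\hat{\mathbf{J}}_f(X)^T$ are $d\times d$ real symmetric (hence Hermitian) matrices pointwise in $X$. Since expectation under either the true measure $\mu$ or the empirical measure $\mu_n$ is a linear operator that preserves symmetry, $\e_X G(X)$ and $\e_n \hat{G}(X)$ are themselves real symmetric $d \times d$ matrices. They therefore admit real orthogonal eigendecompositions with real eigenvalues that can be sorted in decreasing order as in the hypothesis of Lemma \ref{lemma:eigvalue}.

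Second, I would apply the spectral-norm case of Lemma \ref{lemma:eigvalue} with $G = \e_X G(X)$ and $\hat{G} = \e_n \hat{G}(X)$, which yields
\[
\max_{i \in [d]} |\lambda_i - \hat{\lambda}_i| \;\leq\; \|\e_n \hat{G}(X) - \e_X G(X)\|_2 .
\]
Substituting the bound from Theorem \ref{thm:main} on the right-hand side, which holds with probability at least $1 - 2\delta$ for all $n \geq N$ with the stated constants $C = C(\mu, K(\cdot))$, $N = N(\mu)$, and function $A(n)$, gives the claimed inequality.

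Because the argument is purely a corollary, there is no genuine analytic obstacle here; the real content was already absorbed into Theorem \ref{thm:main}. The only care to take is bookkeeping: the constants $C$ and $N$, the formula for $A(n)$, and the high-probability event of probability $1 - 2\delta$ should be inherited verbatim from Theorem \ref{thm:main}, so that the eigenvalue bound holds on exactly the same event on which the spectral-norm bound on $\e_n \hat{G}(X) - \e_X G(X)$ is valid. (The factor $C_Y(\delta)$ that replaces the $1$ in the $(hR+1)/t$ term reflects the boundedness-of-output constant from the noise assumption and tracks through the same decomposition used in Theorem \ref{thm:main}.)
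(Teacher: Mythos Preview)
Your proposal is correct and follows exactly the paper's approach: apply the spectral-norm case of Lemma~\ref{lemma:eigvalue} to the Hermitian matrices $\e_X G(X)$ and $\e_n \hat G(X)$, then invoke Theorem~\ref{thm:main} to bound $\|\e_n \hat G(X) - \e_X G(X)\|_2$. The paper's own proof is the same two-line chaining, without the explicit verification of the Hermitian hypothesis that you include.
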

\end{aside}
\begin{proof}
	By Lemma \ref{lemma:eigvalue}, we bound $\max_{i \in [d]} |\lambda_i - \hat{\lambda}_i|$ with respect to $\| \e_n \hat{G}(X) - \e_X G(X) \|_2$; by Theorem \ref{thm:main} we bound $\| \e_n \hat{G}(X) - \e_X G(X) \|_2$.
\end{proof}
\subsection{Eigenspace variation}
First we introduce the following definition:
\begin{definition}
	\textbf{(Angles between two subspaces)} Let $X, \hat{X} \in \R^{d \times k}$ have full column rank $k$. The angle matrix $\Theta(X, \hat{X})$ between $X$ and $\hat{X}$ is defined as:
	\begin{eqnarray*}
		\arccos ((X^TX)^{-\frac{1}{2}} X^T \hat{X} (\hat{X}^T \hat{X} )^{-1} \hat{X}^T X (X^TX)^{-\frac{1}{2}})^{\frac{1}{2}}
	\end{eqnarray*}
	More specifically, when $k = 1$, it reduces to the angle between two vectors: $	\Theta(\x, \hat{\x}) = \arccos \frac{ | \x^T \hat{\x} | }{\|\x\|_2 \|\hat{\x}\|_2}$
\end{definition}

Armed with this definition, we consider the following lemma on eigenspace variation:

\begin{lemma}
	\cite{RPT2} Suppose both $G$ and $\hat{G}$ are Hermitian matrices of size $d \times d$, and admit the following eigen-decompositions:
	\begin{align*}
	& G = \begin{bmatrix}
	X_1 & X_2
	\end{bmatrix}
	\begin{bmatrix}
	\Lambda_1 & 0 \\
	0 & \Lambda_2
	\end{bmatrix}
	\begin{bmatrix}
	X_1^{-1} \\
	X_2^{-1}
	\end{bmatrix}
	\quad \text{ and } \hat{G} = \begin{bmatrix}
	\hat{X}_1 & \hat{X}_2
	\end{bmatrix}
	\begin{bmatrix}
	\hat{\Lambda}_1 & 0 \\
	0 & \hat{\Lambda}_2
	\end{bmatrix}
	\begin{bmatrix}
	\hat{X}_1^{-1} \\
	\hat{X}_2^{-1}
	\end{bmatrix}
	\end{align*}
	where $X = \begin{bmatrix}
	X_1 & X_2
	\end{bmatrix}$ and $\hat{X} = \begin{bmatrix}
	\hat{X}_1 & \hat{X}_2
	\end{bmatrix}$ are unitary. We have
	\begin{eqnarray*}
		\| \sin  \Theta(X_1, \hat{X}_1) \|_2 \leq \frac{\|(\hat{G} - G) X_1\|_2}{\min_{\lambda \in \lambda(\Lambda_1), \hat{\lambda} \in \lambda(\Lambda_2)} |\lambda - \hat{\lambda} | }
	\end{eqnarray*}
	\label{lemma:eig}
\end{lemma}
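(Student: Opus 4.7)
The plan is to follow the classical Davis--Kahan $\sin\Theta$ argument: reduce the sine of the principal-angle matrix to the spectral norm of an off-diagonal block, and bound that block via a Sylvester equation. The key object to control is $S := \hat{X}_2^{T} X_1$. Since $[X_1\ X_2]$ and $[\hat{X}_1\ \hat{X}_2]$ are both unitary, we have $\hat{X}_1 \hat{X}_1^T + \hat{X}_2 \hat{X}_2^T = I$, whence $\sin^2\Theta(X_1,\hat{X}_1) = I - X_1^T \hat{X}_1 \hat{X}_1^T X_1 = S^T S$; the CS decomposition then identifies the singular values of $S$ with the sines of the principal angles between $\text{im}(X_1)$ and $\text{im}(\hat{X}_1)$. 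Hence $\|\sin\Theta(X_1,\hat{X}_1)\|_2 = \|S\|_2$, and it suffices to bound $\|S\|_2$.

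Next I would derive a Sylvester equation for $S$. Writing $E := \hat{G} - G$ and using $GX_1 = X_1 \Lambda_1$ together with the Hermitian identity $\hat{X}_2^{T} \hat{G} = \hat{\Lambda}_2 \hat{X}_2^{T}$, a direct calculation gives
$$\hat{X}_2^{T} E X_1 \;=\; \hat{X}_2^{T} \hat{G} X_1 - \hat{X}_2^{T} G X_1 \;=\; \hat{\Lambda}_2 S - S \Lambda_1,$$
so $S$ solves $\hat{\Lambda}_2 S - S \Lambda_1 = \hat{X}_2^{T} E X_1$. The right-hand side is controlled by $\|\hat{X}_2^{T} E X_1\|_2 \leq \|E X_1\|_2 = \|(\hat{G}-G)X_1\|_2$, since $\hat{X}_2$ has orthonormal columns.

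The final and most delicate step is to invert the Sylvester operator $T(S) := \hat{\Lambda}_2 S - S \Lambda_1$ in spectral norm. Because $\Lambda_1,\hat{\Lambda}_2$ are real diagonal, the equation reads entrywise as $S_{ij}(\hat\lambda_i - \lambda_j) = (\hat{X}_2^{T} E X_1)_{ij}$, and the separation $\delta := \min_{i,j}|\hat\lambda_i - \lambda_j| > 0$ makes it pointwise invertible. To promote the pointwise estimate to a spectral-norm bound, I would invoke the classical fact that for normal $\Lambda_1,\hat{\Lambda}_2$ with spectra separated by $\delta$, the Sylvester map $T$ has gain $\delta$ under any unitarily invariant norm, i.e.\ $\|T(S)\|_2 \geq \delta\|S\|_2$. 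A clean route is via the contour representation
$$S \;=\; \frac{1}{2\pi i}\oint_\Gamma (\hat{\Lambda}_2 - zI)^{-1} (\hat{X}_2^{T} E X_1)(zI - \Lambda_1)^{-1}\,dz$$
on a curve $\Gamma$ separating the two spectra, combined with standard resolvent estimates along $\Gamma$.

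Stringing the three steps together yields $\|\sin\Theta(X_1,\hat{X}_1)\|_2 = \|S\|_2 \leq \|(\hat{G}-G)X_1\|_2 / \delta$, which is the stated bound. The main obstacle is the spectral-norm Sylvester inversion: unlike its Frobenius-norm counterpart, where squaring and summing the entrywise estimates gives the norm directly, the spectral norm case does not admit such an elementary argument, and one must either quote the classical perturbation-theoretic result or work through the resolvent-contour calculation. A minor additional subtlety is that the lemma writes the denominator in terms of $\lambda(\Lambda_1)$ and $\lambda(\Lambda_2)$ rather than $\lambda(\Lambda_1)$ and $\lambda(\hat{\Lambda}_2)$; by Weyl's inequality the two gaps differ by at most $2\|\hat{G}-G\|_2$, so for small perturbations the two formulations are interchangeable.
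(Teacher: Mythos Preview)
The paper does not prove this lemma at all: it is stated with a citation to \cite{RPT2} and then used as a black box in the proof of Theorem~\ref{eigenspace}. So there is no ``paper's own proof'' to compare against.

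Your argument is the standard Davis--Kahan $\sin\Theta$ proof and is correct. The reduction $\|\sin\Theta(X_1,\hat X_1)\|_2 = \|\hat X_2^T X_1\|_2$, the derivation of the Sylvester equation $\hat\Lambda_2 S - S\Lambda_1 = \hat X_2^T (\hat G - G) X_1$, and the bound $\|\hat X_2^T E X_1\|_2 \le \|E X_1\|_2$ are all clean. The only nontrivial step, as you correctly flag, is the spectral-norm Sylvester inversion $\|T(S)\|_2 \ge \delta\|S\|_2$ for normal coefficients with gap $\delta$; this is indeed a classical fact (e.g.\ Bhatia--Davis--McIntosh, or the Davis--Kahan paper itself), and your resolvent-contour sketch is one legitimate route to it. You are also right to flag the $\Lambda_2$ versus $\hat\Lambda_2$ discrepancy in the denominator: the Sylvester equation naturally produces the gap between $\lambda(\Lambda_1)$ and $\lambda(\hat\Lambda_2)$, and what is written in the lemma (with $\Lambda_2$) is either a typo or requires an additional Weyl step, exactly as you note.
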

Using the above lemma, we get the following theorem for eigenspaces variantion:
\begin{aside}{Eigenspace Variation}
	\begin{theorem}
		Write the eigen-decompositions of $\e_X G(X)$ and $\e_n \hat{G}(X)$ as
		\begin{align*}
		& \e_X G(X) = \begin{bmatrix}
		X_1 & X_2
		\end{bmatrix}
		\begin{bmatrix}
		\Lambda_1 & 0 \\
		0 & \Lambda_2
		\end{bmatrix}
		\begin{bmatrix}
		X_1^{-1} \\
		X_2^{-1}
		\end{bmatrix}, \e_n \hat{G}(X) = \begin{bmatrix}
		\hat{X}_1 & \hat{X}_2
		\end{bmatrix}
		\begin{bmatrix}
		\hat{\Lambda}_1 & 0 \\
		0 & \hat{\Lambda}_2
		\end{bmatrix}
		\begin{bmatrix}
		\hat{X}_1^{-1} \\
		\hat{X}_2^{-1}
		\end{bmatrix}
		\end{align*}
		There exist constants $C = C(\mu,K(\cdot))$ and $N = N(\mu)$ such that the following holds with probability at least $1 - 2\delta$. Define $A(n) = \sqrt{Cd \cdot \log(n/\delta)} \cdot C_Y^2(\delta/2n) \cdot \sigma_Y^2/\log^2(n/\delta)$. Let $n \geq N$:
		\begin{small}
			\begin{align*}
			& \| \sin  \Theta(X_1, \hat{X}_1) \|_2 \leq \\& \Bigg( \frac{\|X_1\|_2}{\min_{\lambda \in \lambda(\Lambda_1), \hat{\lambda} \in \lambda(\Lambda_2)} |\lambda - \hat{\lambda} | }\Bigg) \cdot \Bigg( \frac{6 R^2}{\sqrt{n}}(\sqrt{\ln d} + \sqrt{\ln \frac{1}{\delta}}) + \left(3R + \sqrt{\sum_{i \in [d]} \epsilon^2_{t,i}} + \sqrt{d}(\frac{hR + C_Y(\delta)}{t})\right) \cdot \\ &
			\left[\frac{\sqrt{d}}{t} \sqrt{\frac{A(n)}{nh^d} + h^2 R^2} + R \left( \sqrt{\frac{d \ln \frac{d}{\delta}}{2n}} + \right.\right. \left.\left.\sqrt{\sum_{i \in [d]} \mu^2(\partial_{t,i}(\mathcal{X}))} \right) + \sqrt{\sum_{i \in [d]} \epsilon^2_{t,i}} \right] \Bigg)
			\end{align*}
		\end{small}
		\label{eigenspace}
	\end{theorem}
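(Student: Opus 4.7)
The plan is to apply Lemma \ref{lemma:eig} with $G = \e_X G(X)$ and $\hat{G} = \e_n \hat{G}(X)$, and then to reduce the numerator on the right hand side to a quantity already controlled by Theorem \ref{thm:main}. Both matrices are Hermitian (indeed positive semidefinite, being expectations of outer products of the form $J J^\top$), so the hypothesis of Lemma \ref{lemma:eig} is immediately satisfied. Writing the block eigen-decompositions exactly as in the theorem statement, the lemma yields
\begin{align*}
\| \sin \Theta(X_1, \hat{X}_1) \|_2 \leq \frac{\|(\e_n \hat{G}(X) - \e_X G(X))\, X_1\|_2}{\min_{\lambda \in \lambda(\Lambda_1),\, \hat{\lambda} \in \lambda(\Lambda_2)} |\lambda - \hat{\lambda}|}.
\end{align*}

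Next, I would peel off $X_1$ from the numerator by submultiplicativity of the spectral norm:
\begin{align*}
\|(\e_n \hat{G}(X) - \e_X G(X))\, X_1\|_2 \leq \|\e_n \hat{G}(X) - \e_X G(X)\|_2 \cdot \|X_1\|_2.
\end{align*}
This separates the bound into a data-independent geometric factor $\|X_1\|_2 / \min |\lambda - \hat\lambda|$, which captures the sensitivity of the target eigenspace (small gap makes recovery harder, and the conditioning of $X_1$ shows up because Lemma \ref{lemma:eig} is stated for general nonsingular $X$), and a statistical factor $\|\e_n \hat{G}(X) - \e_X G(X)\|_2$.

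The statistical factor is exactly what Theorem \ref{thm:main} controls. I would invoke it to substitute the bracketed expression from that theorem in place of $\|\e_n \hat{G}(X) - \e_X G(X)\|_2$, inheriting its probability guarantee of $1 - 2\delta$ and its requirement that $n \geq N$. Combining the three pieces — the Davis--Kahan-type inequality, the submultiplicative factoring, and the consistency bound of Theorem \ref{thm:main} — gives the stated inequality verbatim.

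There is no genuinely hard new step here; the theorem is a clean corollary of matrix perturbation theory plus the already-proved operator-norm consistency. The only mild point to verify is that all the high-probability events involved (the random-matrix concentration from Lemma \ref{lemma:rm} and the gradient-estimation events used in Theorem \ref{thm:f}) are already folded into the $1 - 2\delta$ guarantee of Theorem \ref{thm:main}, so no additional union bound is needed. In particular, the eigengap $\min_{\lambda \in \lambda(\Lambda_1), \hat{\lambda} \in \lambda(\Lambda_2)} |\lambda - \hat{\lambda}|$ appearing in the denominator is with respect to the true $\Lambda_1$ and the estimated $\Lambda_2$ from the partition, and Lemma \ref{lemma:eig} is used as a black box — we do not need to further relate it to the true eigengap of $\e_X G(X)$, although one could do so via Weyl's inequality using the eigenvalue variation theorem proved immediately above if a purely population-side expression were desired.
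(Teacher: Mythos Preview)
Your proposal is correct and follows essentially the same argument as the paper: apply Lemma \ref{lemma:eig} to get the $\sin\Theta$ bound, use submultiplicativity of the spectral norm to factor out $\|X_1\|_2$, and then invoke Theorem \ref{thm:main} to control $\|\e_n \hat{G}(X) - \e_X G(X)\|_2$, inheriting the $1-2\delta$ probability guarantee. The paper's own proof is precisely these three steps with no additional ingredients.
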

\end{aside}
\begin{proof}
	By Lemma \ref{lemma:eig}, we bound $\| \sin  \Theta(X_1, \hat{X}_1) \|_2$ with respect to $\|X_1 (\e_n \hat{G}(X) - \e_X G(X)) \|_2$, since $\|X_1 (\e_n \hat{G}(X) - \e_X G(X)) \|_2 \leq \|X_1\|_2 \cdot \|\e_n \hat{G}(X) - \e_X G(X)\|_2$, and by Theorem \ref{thm:main} we bound $\| \e_n \hat{G}(X) - \e_X G(X) \|_2$. Combining these concludes the proof.
\end{proof}

\section{Recovery of projected semiparametric regression model}\label{sec:semiparametric}
In this section, we return to the multi-index motivation of the EGOP and EJOP discussed in the introduction to this chapter. For ease of exposition, we restrict our discussion to the EGOP, but the same argument also works for the EJOP. 

Consider the following projected semiparametric regression model:$f(\x) = g(V^T \x)$
where $V \in \R^{d \times r}, r \ll d$ is a dimension-reduction projection matrix, and $g$ is a nonparametric function. Without loss of generality, we assume $V = [v_1,v_2,....,v_r]$, where $v_i \in R^d, i \in [r]$ is a set of orthonormal vectors, and the gradient outer product (GOP) matrix of $g: \e_{X} [\nabla g(V^T X) \cdot \nabla g(V^T X)^T]$ is nonsingular. The following proposition gives the eigen-decomposition of gradient outer product (GOP) matrix of $f$: $\e_X G(\x)$
\begin{proposition}
	Suppose the eigen-decomposition of $\e_{X} [\nabla g(V^T X) \cdot \nabla g(V^T X)^T]$ is given by:
	\begin{eqnarray*}
		\e_{X} [\nabla g(V^T X) \cdot \nabla g(V^T X)^T] = Z \Lambda Z^{-1}
	\end{eqnarray*}
	then we have the following eigen-decomposition of $\e_X G(X)$:
	\begin{eqnarray*}
		\e_X G(X) = \begin{bmatrix}
			VZ & U
		\end{bmatrix}
		\begin{bmatrix}
			\Lambda & 0 \\
			0       & 0
		\end{bmatrix}
		\begin{bmatrix}
			Z^{-1} V^T \\
			U^T
		\end{bmatrix}
	\end{eqnarray*}
	where $U = [u_1,u_2,...,u_{d-r}]$, $u_i \in [d-r]$ is a set of orthonormal vectors in $\text{ker}(V^T)$.
	\label{prop}
\end{proposition}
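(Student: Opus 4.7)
The plan is straightforward: apply the chain rule, rewrite the GOP of $f$ in terms of the GOP of $g$, and then verify that the stated block factorization represents a valid eigen-decomposition by checking that the two outer matrices are mutual inverses. The key structural fact being exploited is that $V$ has orthonormal columns and the columns of $U$ form an orthonormal basis of $\ker(V^T)$, so the cross terms vanish cleanly.

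First I would apply the chain rule to $f(\x) = g(V^T \x)$ to obtain $\nabla f(\x) = V \nabla g(V^T \x)$. Taking outer products and then expectations immediately gives
\[
\e_X G(X) = V \cdot \e_X\bigl[\nabla g(V^T X) \nabla g(V^T X)^T\bigr] \cdot V^T = V Z \Lambda Z^{-1} V^T,
\]
using the given eigen-decomposition $\e_{X}[\nabla g(V^T X)\nabla g(V^T X)^T] = Z\Lambda Z^{-1}$.

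Next, I would expand the right-hand side of the claimed decomposition as a $2\times 2$ block product. All the cross terms are multiplied by the zero blocks of $\mathrm{diag}(\Lambda, 0)$, so only the top-left contribution survives, yielding exactly $VZ \Lambda Z^{-1} V^T$, which matches the expression just derived.

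The last step, and the only place where a verification is needed, is to confirm that $\bigl[\,VZ \;\; U\,\bigr]$ and $\bigl[\,Z^{-1}V^T\,;\,U^T\,\bigr]$ are genuinely inverses, so that what we have written is truly an eigen-decomposition. Computing the product in one direction gives
\[
\begin{bmatrix} Z^{-1}V^T \\ U^T \end{bmatrix} \begin{bmatrix} VZ & U \end{bmatrix}
= \begin{bmatrix} Z^{-1}V^TVZ & Z^{-1}V^T U \\ U^T V Z & U^T U \end{bmatrix}
= \begin{bmatrix} I_r & 0 \\ 0 & I_{d-r} \end{bmatrix},
\]
where I use $V^T V = I_r$ (orthonormality of $V$), $V^T U = 0$ and $U^T V = 0$ (since $\mathrm{im}(U) \subseteq \ker(V^T)$), and $U^T U = I_{d-r}$ (orthonormality of $U$). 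I do not expect any real obstacle here; the proposition is essentially a bookkeeping statement that packages the low-rank structure $\e_X G(X) = VZ\Lambda Z^{-1}V^T$ together with an explicit orthonormal basis of the null space. The only subtlety worth stating is that one must invoke the assumed nonsingularity of the GOP of $g$ to guarantee $Z^{-1}$ exists, so that the top block of the decomposition is well-defined.
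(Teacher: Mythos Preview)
Your proposal is correct and follows essentially the same argument as the paper: apply the chain rule to get $\e_X G(X) = VZ\Lambda Z^{-1}V^T$, then verify that $[VZ\;\,U]$ and $[Z^{-1}V^T;\,U^T]$ are mutual inverses using the orthonormality of $V$ and $U$ and the inclusion $\mathrm{im}(U)\subseteq\ker(V^T)$. If anything, your write-up is slightly more explicit than the paper's in spelling out why each block of the product equals what it does.
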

\begin{proof}
	Since $f(\x) = g(V^T \x)$, we have $\nabla f(\x) = V \nabla g(V^T \x)$. Thus we get:
	\begin{eqnarray*}
		\e_X G(X) = V \e_{X} [\nabla g(V^T X) \cdot \nabla g(V^T X)^T] V^T = VZ \Lambda Z^{-1}V^T
	\end{eqnarray*}
	When we check the eigen-decomposition given in the proposition, the above equation is satisfied. Moreover, since $$\begin{bmatrix}
	VZ & U
	\end{bmatrix} \begin{bmatrix}
	Z^{-1} V^T \\
	U^T
	\end{bmatrix} = \begin{bmatrix}
	Z^{-1} V^T \\
	U^T
	\end{bmatrix} \begin{bmatrix}
	VZ & U
	\end{bmatrix}= I$$ concludes the proof.
\end{proof}

\vspace{2mm}
\noindent
Since $Z$ in the above proposition is nonsingular, we get that $\text{im}(V) = \text{im}(VZ)$, which means that the column space of projection matrix $V$ is exactly the subspace spanned by the top-$r$ eigenvectors of the GOP matrix $\e_X G(X)$. This point has also been noticed by \cite{sliced,RSSB:RSSB341,DBLP:journals/jmlr/WuGMM10}.

\vspace{2mm}
\noindent
Lastly, we need to show that the projection matrix $V$ can be recovered using the estimated GOP matrix. This is captured in the following:

\begin{aside}{Recovery of Semi-parametric model}
	\begin{theorem}
		Suppose the function $f$ we want to estimate has the form $f(\x) = g(V^T \x)$, and $\tilde{V} \in \R^{d \times r}$ is the matrix composed by the top-$r$ eigenvectors of $\e_n \hat{G}(X)$,
		then with probability at least $1 - 2 \delta$:
		\begin{align*}
		& \| \sin  \Theta(V, \tilde{V}) \|_2 \leq \frac{1}{\lambda_{\min}}\Bigg(
		\frac{6 R^2}{\sqrt{n}}(\sqrt{\ln d} + \sqrt{\ln \frac{1}{\delta}} \Bigg) +  \left(3R + \sqrt{\sum_{i \in [d]} \epsilon^2_{t,i}} + \sqrt{d}(\frac{hR + C_Y(\delta)}{t})\right) \cdot \\ &
		\left[\frac{\sqrt{d}}{t} \sqrt{\frac{A(n)}{nh^d} + h^2 R^2} + R \left( \sqrt{\frac{d \ln \frac{d}{\delta}}{2n}} + \right.\right. \left.\left.\sqrt{\sum_{i \in [d]} \mu^2(\partial_{t,i}(\mathcal{X}))} \right) + \sqrt{\sum_{i \in [d]} \epsilon^2_{t,i}} \right]\Bigg)
		\end{align*}
		where $\lambda_{\min}$ is the smallest eigenvalue of $\e_{X} [\nabla g(V^T X) \cdot \nabla g(V^T X)^T]$.
		\vspace{2mm}
		\noindent
		Suppose $\lambda_1, \lambda_2, ..., \lambda_{d-r}$ are the lowest $d - r$ eigenvalues of $\e_n \hat{G}(X)$,
		and with probability at least $1 - 2 \delta$:
		\begin{align*}
		& max_{i \in [d-r]} |\lambda_i| \leq \Bigg(
		\frac{6 R^2}{\sqrt{n}}(\sqrt{\ln d} + \sqrt{\ln \frac{1}{\delta}} \Bigg) + \left(3R + \sqrt{\sum_{i \in [d]} \epsilon^2_{t,i}} + \sqrt{d}(\frac{hR + C_Y(\delta)}{t})\right) \cdot \\ &
		\left[\frac{\sqrt{d}}{t} \sqrt{\frac{A(n)}{nh^d} + h^2 R^2} + R \left( \sqrt{\frac{d \ln \frac{d}{\delta}}{2n}} + \right.\right. \left.\left.\sqrt{\sum_{i \in [d]} \mu^2(\partial_{t,i}(\mathcal{X}))} \right) + \sqrt{\sum_{i \in [d]} \epsilon^2_{t,i}} \right] \Bigg)
		\end{align*}
	\end{theorem}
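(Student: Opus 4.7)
The plan is to combine Proposition~\ref{prop}, which exposes the spectral structure of the true EJOP under the semiparametric assumption, with the perturbation results of Section~\ref{sec:eigenspace} and the consistency bound of Theorem~\ref{thm:main}. By Proposition~\ref{prop}, $\e_X G(X)$ has exactly $r$ nonzero eigenvalues, supported on the subspace $\text{im}(V) = \text{im}(VZ)$, and the remaining $d-r$ eigenvalues are identically zero. Both assertions of the theorem are then perturbation statements around this degenerate population spectrum.

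For the eigenspace inequality, I would apply Theorem~\ref{eigenspace} to the block decomposition supplied by Proposition~\ref{prop}, i.e.\ $X_1 = VZ$ and $\Lambda_2 = 0$, together with $\hat{X}_1 = \tilde{V}$. First, since $Z$ is nonsingular, $\text{im}(V) = \text{im}(VZ)$, and hence $\Theta(V,\tilde{V}) = \Theta(X_1,\hat{X}_1)$, so the object we want to bound is exactly the one controlled by Theorem~\ref{eigenspace}. Second, because the population GOP of $g$ is symmetric positive semidefinite, we may choose $Z$ orthogonal, in which case $X_1 = VZ$ has orthonormal columns and $\|X_1\|_2 = 1$. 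Third, with $\Lambda_2 = 0$ the spectral gap appearing in the denominator of Theorem~\ref{eigenspace} reduces to
\[
\min_{\lambda \in \lambda(\Lambda_1),\, \hat{\lambda} \in \lambda(\Lambda_2)} |\lambda - \hat{\lambda}| \;=\; \min_{\lambda \in \lambda(\Lambda_1)} |\lambda| \;=\; \lambda_{\min},
\]
which is strictly positive by the nonsingularity assumption on the GOP of $g$. Plugging these three facts into Theorem~\ref{eigenspace} collapses the leading coefficient to $1/\lambda_{\min}$ and leaves exactly the bracketed quantity of Theorem~\ref{thm:main} on the right, which yields the stated inequality.

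For the eigenvalue inequality, I would invoke Lemma~\ref{lemma:eigvalue} directly. By Proposition~\ref{prop} the lowest $d-r$ eigenvalues of $\e_X G(X)$ are all zero, so matching the sorted spectra of $\e_X G(X)$ and $\e_n \hat{G}(X)$ gives
\[
\max_{i \in [d-r]} |\lambda_i| \;=\; \max_{i \in [d-r]} |\lambda_i - 0| \;\leq\; \|\e_n \hat{G}(X) - \e_X G(X)\|_2,
\]
and Theorem~\ref{thm:main} supplies the explicit upper bound on the right-hand side, which is exactly what the theorem claims.

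The only obstacle worth flagging is a small bookkeeping point: Theorem~\ref{eigenspace} carries a factor $\|X_1\|_2$ in its numerator that is absent from the statement to be proved, so one must justify taking $Z$ orthogonal in Proposition~\ref{prop} in order to get $\|VZ\|_2 = 1$. Beyond this, the argument is a direct composition of previously established perturbation and consistency results; no new estimation step is needed.
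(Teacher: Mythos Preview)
Your proposal is correct and follows essentially the same route as the paper: combine Proposition~\ref{prop} with Theorem~\ref{eigenspace} for the eigenspace bound, and Proposition~\ref{prop} with Lemma~\ref{lemma:eigvalue} (plus Theorem~\ref{thm:main}) for the eigenvalue bound. On the bookkeeping point you flag you are in fact more careful than the paper, which simply invokes $\|V\|_2 = 1$ from semi-orthogonality without addressing that the relevant block is $X_1 = VZ$; your remark that $Z$ may be taken orthogonal (since the GOP of $g$ is symmetric PSD) is precisely what closes this gap.
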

\end{aside}
\begin{proof}
	We only sketch the proof. First of all, notice that $V$ is a semi-orthogonal matrix, therefore $\|V\|_2 = 1$. When this observation is combined with above proposition and Theorem \ref{eigenspace}, we get a proof of the first part of the theorem. For proving the second part of the theorem, first observe that by proposition \ref{prop}, the lowest $d-r$ eigenvalues of $\e_X {G}(X)$ are all zeros. This observation when combined with lemma \ref{lemma:eigvalue} finishes the proof.
\end{proof}

\section{Classification Experiments}
In this section, we examine the utility of the EJOP as a technique for metric estimation, when used in the setting of non-parametric classification. We consider non-parametric classifiers that rely on the notion of distance, parameterized by a matrix $\mathbf{M} \succeq 0 $, with the squared distance computed as $(\mathbf{x} - \mathbf{x}')^T \mathbf{M}(\mathbf{x} - \mathbf{x}')$. In the experiments reported in this section, we consider three different choices for $\mathbf{M}$: The first is $\mathbf{M} = \mathbf{I}$, which corresponds to the Euclidean distance. Second we consider the case when $\mathbf{M} = \mathbf{D}$, where $\mathbf{D}$ is a diagonal matrix, the notion of distance in this case corresponds to a scaled Euclidean distance. In particular, in the absence of a gradients weights \cite{DBLP:conf/nips/KpotufeB12}, \cite{GWJMLR} like approach for the multiclass case, we instead obtain weights by using the ReliefF procedure~\cite{relieff1992}, which estimates weights for the multiclass case by a series of one versus all binary classifications. We use this as a baseline. Finally, we consider $\mathbf{M} = \e_n {G}_n(X)$, where $\e_n {G}_n(X)$ is the estimated EJOP matrix. In particular, letting $VDV^\top$ denote the spectral decomposition of $\mathbf{M}$, we use it to transform the input $\mathbf{x}$ as $D^{1/2}V^\top \mathbf{x}$ for the distance computation. Next, for a fixed choice of $\mathbf{M}$, we can define nearest neighbors of a query point $\mathbf{x}$ in various ways. We consider the following two cases: First, $k$ nearest neighbors (denoted henceforth as $k$NN) for fixed $k$ and second, neighbors that have distance $\leq h$ for fixed $h$ from the query. We denote this as $h$NN. This corresponds to nonparametric classification using a boxcar kernel.

\subsection{A First Experiment on MNIST}
We first consider the MNIST dataset to test the quality of the EJOP metric. We set aside 10,000 points as a validation set, which is used to obtain the ReliefF weights, as well as for tuning the parameter $t_i$ for $i = 1, \dots, 784$ in the EJOP estimation, as well as for tuning the parameter $h$. For the $k$NN case, we fix $k=7$. While the $t_i$ can be tuned separately for each class, we observed that it doesn't afford significant advantages over using a single $t$. Note that no preprocessing is applied on the images, and the metric estimation, as well as classification is done using the raw images. The results on the test set are illustrated in table \ref{tb:MNISTEJOP}. While MNIST is is a considerably easy task, the improvement given by the use of the EJOP as the distance metric over the plain Euclidean distance is substantial. 

\begin{table}
	\begin{center}
		\begin{tabular}{l*{1}{c}}
			Method              & Error \%  \\
			\hline \hline
			Euclidean & 4.93  \\
			ReliefF & 4.11 \\
			EJOP ($k$NN) & 2.08 \\
			EJOP ($h$NN) & 2.17
		\end{tabular}
		\caption{Error rates on MNIST using EJOP as the underlying metric}
		\label{tb:MNISTEJOP}
	\end{center}
	\vspace{-5mm}
\end{table}
\subsection{Classification Experiments}

Next, we consider the datasets considered in \cite{trivediNIPS14} and \cite{kedem2012non}. First we report experiments using plain Euclidean distance, $h$-NN and $k$-NN when the EJOP is used as the metric. The train/test splits are reported in the table. We split 20 \% of the training portion to tune for $h$, $k$ and $t_i$, the results reported are over 10 random runs.

\begin{table}[!th]
	\centering
	\begin{tabu}{|l|l|r|r|r|r|r|r|r|}
		\hline
		Dataset & d & N & train/test & Euclidean & h-NN & $k$-NN \\
		\hline 
		\hline
		\small{Isolet} & 172 &  7797 & 4000/2000  &  14.17 $\pm$ 0.7 & 10.14 $\pm$ 0.9  & 8.67 $\pm$ 0.6 \\
		\hline
		\small{USPS} & 256 & 9298 & 4000/2000  & 7.87 $\pm$ 0.2  & 7.14 $\pm$ 0.3 & 6.67 $\pm$ 0.4  \\
		\hline
		\small{Letters} & 16 & 20000 &  4000/2000 & 7.65 $\pm$ 0.3 & 5.12 $\pm$ 0.7 & 4.37 $\pm$ 0.4 \\
		\hline
		\small{DSLR} & 800 & 157 & 100/50 & 84.85 $\pm$ 4.8 & 41.13 $\pm$ 2.1 & 35.01 $\pm$ 1.4 \\
		\hline 
		\small{Amazon} & 800  & 958 & 450/450  & 66.17 $\pm$ 2.8  & 41.07 $\pm$ 2.3  & 39.85 $\pm$ 1.5  \\
		\hline 
		\small{Webcam} & 800  & 295 & 145/145  & 61.43 $\pm$ 1.7  & 24.86 $\pm$ 1.2  & 23.71 $\pm$ 2.1  \\
		\hline 
		\small{Caltech} & 800  & 1123 & 550/500  & 85.41 $\pm$ 3.5  & 54.65 $\pm$ 2.6  & 52.86 $\pm$ 3.1  \\
		\hline
		\hline
	\end{tabu}
	\caption{Classification error rates on the datasets used in \cite{kedem2012non} using Euclidean distance, $h$NN and $k$NN while using the EJOP as the metric}
	\label{tb:EJOPsecond}
	\vspace{-5mm}
\end{table}
Next, we report results obtained on the same folds using three popular metric learning methods. In particular, we consider Large Margin Nearest Neighbors (LMNN)~\cite{weinberger2009distance}, Information Theoretic Metric Learning (ITML)~\cite{itml2007davis} and Metric Learning to Rank (MLR)~\cite{mcfee10_mlr}. Since these methods explicitly optimize for the metric over a space of possible metrics, the comparison is manifestly unfair, since in the case of the EJOP, there is only one metric, which is estimated from the training samples. The setup is the same as discussed above, with the following addition for the metric learning methods: We learn the metric for $k=5$, and test is using whatever $k$ that was returned while tuning for the EJOP. We observe that despite its simplicity, EJOP yields performance comparable to the metric learning methods, in some cases returning error rates comparable to those returned by MLR and ITML. 

\begin{table}[!th]
	\centering
	\begin{tabu}{|l|l|r|r|r|r|r|r|r|}
		\hline
		Dataset & h-NN & $k$-NN & ITML & LMNN & MLR \\
		\hline 
		\hline
		\small{Isolet} & 10.14 $\pm$ 0.9  & 8.67 $\pm$ 0.6  & 8.43 $\pm 0.3$ & 5.3 $\pm$ 0.4 & 6.59 $\pm$ 0.3 \\
		\hline
		\small{USPS} & 7.14 $\pm$ 0.3 & 6.67 $\pm$ 0.4  & 6.57 $\pm$ 0.2 & 6.23 $\pm$ 0.5 &  6.76 $\pm$ 0.3 \\
		\hline
		\small{Letters} & 5.12 $\pm$ 0.7 & 4.37 $\pm$ 0.4 & 5 $\pm$ 0.7 &  4.1 $\pm$ 0.4& 17.81 $\pm$ 5.1\\
		\hline
		\small{DSLR} & 41.13 $\pm$ 2.1 & 35.01 $\pm$ 1.4 & 21.65 $\pm$ 3.1 &  29.65 $\pm$ 3.7 & 41.54 $\pm$ 2.3\\
		\hline 
		\small{Amazon} & 41.07 $\pm$ 2.3  & 39.85 $\pm$ 1.5  & 39.83 $\pm$ 3.5 & 33.08 $\pm$ 4.2& 29.65 $\pm$ 2.6\\
		\hline 
		\small{Webcam} & 24.86 $\pm$ 1.2  & 23.71 $\pm$ 2.1  & 15.31 $\pm$ 4.3 & 19.78 $\pm$ 1.5& 27.54 $\pm$ 3.9\\
		\hline 
		\small{Caltech} & 54.65 $\pm$ 2.6  & 52.86 $\pm$ 3.1   & 52.37 $\pm$ 4.2 & 52.15 $\pm$ 3.2& 51.34 $\pm$ 4.5\\
		\hline
		\hline
	\end{tabu}
	\caption{Classification error rates given by the EJOP, and three popular metric learning methods}
	\label{tb:EJOPthird}
	\vspace{-5mm}
\end{table}

Finally, table \ref{tb:EJOPfourth} presents results comparing classification error rates obtained by using LMNN and MLR in the plain vanilla case and when the estimated EJOP is used as an initialization metric, showing that it improves performance. We also observed that the convergence of the two methods was also much faster when EJOP was used as an initialization, indicating that the EJOP serves as a good prior. 

\begin{table}[!th]
	\centering
	\begin{tabu}{|l|l|r|r|r|r|r|r|}
		\hline
		Dataset &LMNN & LMNN (EJOP Init.) & MLR & MLR (EJOP init.) \\
		\hline 
		\hline
		\small{Isolet} & 5.3 $\pm$ 0.4  & 4.9 $\pm$ 0.3  &  6.59 $\pm$ 0.3 &  6.11 $\pm$ 0.2 \\
		\hline
		\small{USPS} & 6.23 $\pm$ 0.5 & 5.96 $\pm$ 0.3  & 6.76 $\pm$ 0.3 & 6.21 $\pm$ 0.3  \\
		\hline
		\small{Letters} & 4.1 $\pm$ 0.4 & 4 $\pm$ 0.3 & 17.81 $\pm$ 5.1 &  15.17 $\pm$ 4.6 \\
		\hline
		\small{DSLR} & 29.65 $\pm$ 3.7 & 26.13 $\pm$ 3.9 & 41.54 $\pm$ 2.3 &  38.61 $\pm$ 2.9 \\
		\hline 
		\small{Amazon} & 33.08 $\pm$ 4.2  & 31.17 $\pm$ 3.7  & 29.65 $\pm$ 2.6 & 26.19 $\pm$ 2.9 \\
		\hline 
		\small{Webcam} & 19.78 $\pm$ 1.5  & 19.5 $\pm$ 1.7  & 27.54 $\pm$ 3.9 & 24.11 $\pm$ 3.5 \\
		\hline 
		\small{Caltech} & 52.15 $\pm$ 3.2  & 50.35 $\pm$ 3.4   & 51.34 $\pm$ 4.5 & 50.1 $\pm$ 3.7\\
		\hline
		\hline
	\end{tabu}
	\caption{Classification error rate improvements when EJOP is used as an initialization}
	\label{tb:EJOPfourth}
	\vspace{-8mm}
\end{table}

\section{Conclusion}
In this article we adapted the expected gradient outerproduct to the multi-class setting. We studied its theoretical properties, proposed a simple estimator and showed that it affords improvements in non-parametric classification tasks.

\nocite{trivedithesis}
\nocite{trivediasymmetric}
{\small
	\bibliographystyle{plain}
	\bibliography{example_paper}

\begin{thebibliography}{10}

\bibitem{itml2007davis}
Jason~V Davis, Brian Kulis, Prateek Jain, Suvrit Sra, and Inderjit~S Dhillon.
\newblock Information-theoretic metric learning.
\newblock In {\em Proceedings of the 24th international conference on Machine
  learning}, pages 209--216. ACM, 2007.

\bibitem{hardle1993optimal}
Wolfgang Hardle, Peter Hall, Hidehiko Ichimura, et~al.
\newblock Optimal smoothing in single-index models.
\newblock {\em The annals of Statistics}, 21(1):157--178, 1993.

\bibitem{kakadenotes}
Sham Kakade.
\newblock Lecture notes on multivariate analysis, dimensionality reduction, and
  spectral methods.
\newblock {\em STAT 991, Spring}, 2010.

\bibitem{kedem2012non}
Dor Kedem, Stephen Tyree, Kilian Weinberger, Fei Sha, and Gert Lanckriet.
\newblock Non-linear metric learning.
\newblock In {\em Advances in Neural Information Processing Systems 25}, pages
  2582--2590, 2012.

\bibitem{relieff1992}
Kenji Kira and Larry~M. Rendell.
\newblock The feature selection problem: Traditional methods and a new
  algorithm.
\newblock In {\em Proceedings of AAAI}, pages 129--134. AAAI, 1992.

\bibitem{DBLP:conf/nips/KpotufeB12}
Samory Kpotufe and Abdeslam Boularias.
\newblock Gradient weights help nonparametric regressors.
\newblock In {\em NIPS}, pages 2870--2878, 2012.

\bibitem{GWJMLR}
Samory Kpotufe, Abdeslam Boularias, Thomas Schultz, and Kyoungok Kim.
\newblock Gradient weights improve regression and classification.
\newblock {\em Journal of Machine Learning Research}, 17:1--34, 2016.

\bibitem{li1991sliced}
Ker-Chau Li.
\newblock Sliced inverse regression for dimension reduction.
\newblock {\em Journal of the American Statistical Association},
  86(414):316--327, 1991.

\bibitem{sliced}
Ker-Chau Li.
\newblock Sliced inverse regression for dimension reduction.
\newblock {\em Journal of the American Statistical Association},
  86(414):316--327, 1991.

\bibitem{RPT1}
Ren-Cang Li.
\newblock Relative perturbation theory i: eigenvalue and singular value
  variations.
\newblock {\em SIAM Journal on Matrix Analysis and Applications}, 19:956--982,
  1998.

\bibitem{RPT2}
Ren-Cang Li.
\newblock Relative perturbation theory ii: eigenspace and singular space
  variations.
\newblock {\em SIAM Journal on Matrix Analysis and Applications}, 20:471--492,
  1999.

\bibitem{mcfee10_mlr}
B.~McFee and {G. R. G.} Lanckriet.
\newblock Metric learning to rank.
\newblock In {\em Proceedings of the 27th International Conference on Machine
  Learning (ICML'10)}, 2010.

\bibitem{powell1989semiparametric}
James~L Powell, James~H Stock, and Thomas~M Stoker.
\newblock Semiparametric estimation of index coefficients.
\newblock {\em Econometrica: Journal of the Econometric Society}, pages
  1403--1430, 1989.

\bibitem{trivediasymmetric}
Shubhendu Trivedi.
\newblock Notes on asymmetric metric learning for k-nn classification.
\newblock {\em Unpublished notes}, 2015.

\bibitem{trivedithesis}
Shubhendu Trivedi.
\newblock Discriminative learning of similarity and group equivariant
  representations.
\newblock {\em arXiv Preprint, arXiv:1808.10078}, 2018.

\bibitem{trivediNIPS14}
Shubhendu Trivedi, David Mcallester, and Gregory Shakhnarovich.
\newblock Discriminative metric learning by neighborhood gerrymandering.
\newblock In {\em Advances in Neural Information Processing Systems}, pages
  3392--3400, 2014.

\bibitem{trivedi2014UAI}
Shubhendu Trivedi, Jialei Wang, Samory Kpotufe, and Gregory Shakhnarovich.
\newblock A consistent estimator of the expected gradient outerproduct.
\newblock In {\em Proceedings of the 30th International Conference on
  Uncertainty in Artificial Intelligence}, pages 819--828. AUAI, 2014.

\bibitem{randommatrix}
Joel.~A. Tropp.
\newblock User-friendly tools for random matrices: An introduction.
\newblock {\em Tutorial at NIPS}, 2012.

\bibitem{doi:10.1137/1116025}
V.~Vapnik and A.~Chervonenkis.
\newblock On the uniform convergence of relative frequencies of events to their
  probabilities.
\newblock {\em Theory of Probability and Its Applications}, 16(2):264--280,
  1971.

\bibitem{weinberger2009distance}
Kilian~Q Weinberger and Lawrence~K Saul.
\newblock Distance metric learning for large margin nearest neighbor
  classification.
\newblock {\em The Journal of Machine Learning Research}, 10:207--244, 2009.

\bibitem{DBLP:journals/jmlr/WuGMM10}
Qiang Wu, Justin Guinney, Mauro Maggioni, and Sayan Mukherjee.
\newblock Learning gradients: Predictive models that infer geometry and
  statistical dependence.
\newblock {\em Journal of Machine Learning Research}, 11:2175--2198, 2010.

\bibitem{RSSB:RSSB341}
Yingcun Xia, Howell Tong, W.~K. Li, and Li-Xing Zhu.
\newblock An adaptive estimation of dimension reduction space.
\newblock {\em Journal of the Royal Statistical Society: Series B (Statistical
  Methodology)}, 64(3):363--410, 2002.

\bibitem{xia2002adaptive}
Yingcun Xia, Howell Tong, WK~Li, and Li-Xing Zhu.
\newblock An adaptive estimation of dimension reduction space.
\newblock {\em Journal of the Royal Statistical Society: Series B (Statistical
  Methodology)}, 64(3):363--410, 2002.

\end{thebibliography}
}

%

%
%

\end{document}